\definecolor{citecolor}{HTML}{0071BC}
\definecolor{linkcolor}{HTML}{D32F2F}
\definecolor{cellcolor}{HTML}{E3F2FD}
\definecolor{red}{HTML}{D32F2F}
\definecolor{magenta}{HTML}{D81B60}
\theoremstyle{plain}
\newtheorem{theorem}{Theorem}[section]
\newtheorem{proposition}[theorem]{Proposition}
\newtheorem{corollary}[theorem]{Corollary}
\theoremstyle{definition}
\newtheorem{assumption}[theorem]{Assumption}
\theoremstyle{remark}
\pgfplotsset{compat = newest}
\renewcommand{\cite}{\citep}
\def\eqref#1{equation~\ref{#1}}
\def\1{\bm{1}}
\DeclareMathAlphabet{\mathsfit}{\encodingdefault}{\sfdefault}{m}{sl}
\SetMathAlphabet{\mathsfit}{bold}{\encodingdefault}{\sfdefault}{bx}{n}
\title{A Theoretical Lens for RL-Tuned Language Models via Energy-Based Models}
\author{
  Zhiquan Tan\textsuperscript{1}, Yinrong Hong\textsuperscript{2}\\
  1: Tsinghua University\\
  2: Beihang University\\
}
\begin{document}

\maketitle

\begin{abstract}

    Large language models (LLMs) trained via KL-regularized reinforcement learning demonstrate strong instruction following, self-correction, and reasoning abilities. Yet their theoretical underpinnings remain limited. We exploit the closed-form energy-based model (EBM) structure of the optimal KL-regularized policy to provide a unified variational analysis of LLMs.

    For instruction-tuned models, under natural assumptions on reward potentials and pretraining symmetry, we prove that the transition kernel satisfies detailed balance with respect to a scalar potential encoding response quality. This yields monotonic KL convergence to a high-quality stationary distribution, bounded hitting times to superior states, and exponential mixing governed by the spectral gap.

    For reasoning models trained with verifiable rewards (RLVR), we show the objective is equivalent to expected KL minimization toward an optimal reasoning distribution, with the suboptimality gap reducing to the Bernoulli KL between target and current accuracies along the natural gradient flow. This helps explain empirical entropy-accuracy trade-offs.

\end{abstract}

\section{Introduction}

The training of Large Language Models (LLMs) via Reinforcement Learning (RL) has fundamentally shifted the paradigm of generative AI \cite{openai2023gpt,touvron2023llama}. By moving beyond the simple next-token prediction objective of pretraining, alignment algorithms—such as Proximal Policy Optimization (PPO) \cite{schulman2017proximal}, Direct Preference Optimization (DPO) \cite{rafailov2023direct}, Group Relative Policy Optimization (GRPO) \cite{shao2024deepseekmath}: the model must maximize a reward signal while remaining close to a reference distribution. While the empirical success of this pipeline is undeniable, yielding models that are helpful, harmless, and increasingly capable of complex reasoning, the theoretical underpinnings of their behaviour remain under-explored. 

In this paper, we develop a principled theoretical framework that leverages the Energy-Based Model (EBM) \cite{LeCun2006} viewpoint to analyze both the {static} and {dynamic} behavior of RL trained LLMs, including both instruction-tuned and reasoning-capable LLMs. First, we theoretically analyze the transition structure induced by an instruction-tuned model induced by an instruction-tuned model $\pi_{inst}$ under plausible modeling assumptions. We show that the induced transition kernel satisfies a multiplicative detailed-balance relation with respect to an explicit potential $V$, which has been emperically observed by \cite{song2025detailed}. And later generalized to the case of self-correction \cite{kumar2024training}. The detailed-balance structure immediately yields monotonicity of the KL divergence to a canonical stationary distribution and gives quantitative bounds on hitting times to low-potential (high-quality) states. Spectral graph techniques then relate mixing rates to the graph Laplacian: the absolute spectral gap controls exponential convergence rates of expected potential, which in turn provides an interpretable explanation for observed differences in convergence speed across models.

Extending this framework to reasoning models trained via reinforcement learning with verifiable rewards (RLVR) \cite{wang2024math}, we derive an exact equivalence between the RLVR objective and minimization of the expected KL divergence to an optimal reasoning distribution. Under a univariate exponential family parameterization of the optimization trajectory—a natural consequence of natural gradient flow on the policy manifold—we prove that this KL gap reduces precisely to the Bernoulli KL divergence between target and current verification accuracies. This closed-form relationship yields theoretical insights into observed empirical phenomena, including the entropy-accuracy trade-off observed by \cite{cui2025entropy}.

\section{Related Works}

\paragraph{Energy-based models for LLMs.}
Energy-based formulations provide a natural, sequence-level view for modeling compatibility between full-token sequences and conditioning context, and they have recently been used both as alternative sequence generators and as analytical tools to explain LLM phenomena. Residual sequence-level EBMs (which add an unnormalized corrective energy on top of a pretrained locally-normalized generator) mitigate exposure bias and can improve generation quality and global coherence compared with purely autoregressive decoders \cite{Bakhtin2021}. More recent work has extended the EBM perspective to diffusion- and parallel-generation paradigms by introducing sequence-level energies at each diffusion step, recovering much of the sampling efficiency of diffusion approaches while closing the perplexity gap to autoregressive models \cite{Xu2024_edlm}. Beyond generation, scalable EBM training and sampling techniques (e.g., MCMC-scaling and implicit-generation schemes) illuminate why EBMs can capture long-range structure and compositional generalization—properties that are useful when one attempts to explain empirical LLM behaviours such as distributional shifts, spurious low-probability modes (hallucinations), and the effect of global reranking or calibration procedures \cite{Du2019,Song2021}.

\paragraph{Energy-based models in reinforcement learning.}
In reinforcement learning, the connection between soft (maximum-entropy) objectives and unnormalized Boltzmann policy forms naturally leads to energy-based policy representations. Early work shows that expressive, multimodal policies can be represented as EBMs over actions conditioned on states, and that soft Q-functions induce Boltzmann-form optimal policies which can be sampled or approximated via amortized samplers \cite{Haarnoja2017}. Subsequent lines of research have applied EBMs to both model-based planning (by scoring state-sequences or imagined trajectories) and to policy learning, using modern training techniques (SVGD, diffusion samplers, normalizing-flow hybrids) to make sampling and entropy estimation tractable in continuous, high-dimensional domains \cite{Du2019_planning,Chao2024_EBFlow}. Very recent algorithms fuse flow-based samplers or Stein variational updates with maximum-entropy objectives to obtain expressive yet trainable energy-based policies that improve sample efficiency and capture multimodality in action distributions; these approaches provide practical recipes and theoretical connections that are directly relevant for viewing LLM alignment / policy distillation problems through an EBM lens (e.g., seeing the aligned policy or critic as an energy landscape over sequences or behaviors) \cite{rafailov2023direct, Chao2024_EBFlow,Messaoud2024_S2AC}.

\section{Preliminary}

In KL-regularized reinforcement learning (RL), an important fact is that the optimal policy naturally takes the form of an energy-based model (EBM). This structure does not arise from a particular optimization algorithm—such as PPO, GRPO, or DPO—but is instead a direct consequence of the variational optimality of the objective itself. 

In a standard RL for LLM pipeline, training typically proceeds in two stages. In the first stage, a reward model $r(x,y)$ is trained using carefully created data, where $x$ denotes the input prompt and $y$ the model output. In the second stage, the reward model is fixed, and reinforcement learning is used to optimize a policy $\pi(y|x)$ that maximizes the reward while maintaining linguistic coherence and stability.

Naively maximizing the expected reward, $\mathbb{E}_{x \sim \mathcal{D},\, y \sim \pi(\cdot|x)}[r(x,y)]$, is generally insufficient and may even be harmful in practice. In modern RL formulations, this constraint is typically imposed via a Kullback--Leibler (KL) divergence penalty between the policy $\pi$ and a reference model $\pi_{\mathrm{ref}}$, which is usually a pretrained or (supervised) fine-tuned language model. The resulting optimization problem can be written as
\begin{equation}
\max_{\pi} \;
\mathbb{E}_{x \sim \mathcal{D},\, y \sim \pi(\cdot|x)} [r(x,y)]
\;-\;
\beta \, D_{\mathrm{KL}}\!\left(\pi(\cdot|x)\,\|\,\pi_{\mathrm{ref}}(\cdot|x)\right),
\label{eq:rlhf_obj}
\end{equation}
where $\beta>0$ controls the strength of the KL regularization and governs the trade-off between reward maximization and adherence to the reference model.

Intuitively, this objective encourages the policy to allocate more probability mass to high-reward outputs while penalizing deviations from the reference distribution. From a variational perspective, however, the KL-regularized form has a much deeper implication: it renders the optimal policy analytically tractable and endows it with a energy-based structure.

To see this, we rewrite the objective in Eq.~\eqref{eq:rlhf_obj}. Ignoring terms that are constant with respect to $\pi$, maximizing the objective is equivalent to minimizing
\[
\min_{\pi} \;
\mathbb{E}_{x \sim \mathcal{D}}
\left[
\mathbb{E}_{y \sim \pi(\cdot|x)}
\left[
\log \frac{\pi(y|x)}{\pi_{\mathrm{ref}}(y|x)}
\;-\;
\frac{1}{\beta} r(x,y)
\right]
\right].
\]
This expression reveals that the optimization problem is equivalent to minimizing the KL divergence between $\pi(y|x)$ and a particular reward-tilted distribution derived from $\pi_{\mathrm{ref}}$. 

Applying standard variational calculus with Lagrange multipliers (enforcing normalization independently for each $x$), the unique optimal solution is obtained in closed form \cite{rafailov2023direct}. The optimal policy $\pi^*(y|x)$ takes the form 
\begin{equation}
\pi^*(y | x)
=
\frac{1}{Z(x)} \,
\pi_{\mathrm{ref}}(y | x)
\exp\!\left( \frac{1}{\beta} r(x,y) \right),
\end{equation}
where
\[
Z(x)
=
\sum_y
\pi_{\mathrm{ref}}(y|x)
\exp\!\left( \frac{1}{\beta} r(x,y) \right)
\]
is the energy (partition) function that ensures normalization.

This result shows that, under KL-regularized RL, the optimal policy is exactly equivalent to a conditional energy-based model, with the reward function acting as (negative) energy and the reference model providing the base measure. This observation serves as the starting point for our subsequent analysis.

\section{Behavior of Instruction-Tuned Models}

In this section, we focus on the behavior of instruction-tuned models $\pi_{\mathrm{inst}}$, which are obtained after supervised fine-tuning (SFT) and reinforcement learning from human feedback (RLHF). Our goal is not to revisit the full training pipeline in engineering detail, but rather to establish a clean mathematical abstraction that allows us to analyze the induced dynamics of such models in a principled way.

For the purpose of theoretical analysis, we conceptually merge the SFT and RLHF stages into a single optimization procedure. This simplification is well justified: SFT can be viewed as a special case of reinforcement learning with a fixed dataset-induced reward, and both stages ultimately bias the pretrained model toward preferred generations under a reward signal. Concretely, we assume that the instruction-tuned model $\pi_{\mathrm{inst}}$ is obtained from a pretrained model $\pi_{\mathrm{pre}}$ by optimizing a reward function $r$, which assigns scalar scores to prompt–answer (or, more generally, state–state) pairs based on correctness, helpfulness, alignment with human preferences, and other qualitative criteria.

Throughout this section, we adopt an abstract state-space view. States $f,g \in \mathcal{S}$ may represent responses, latent semantic states, or coarse-grained functional configurations of the model. Transitions between states are induced by the stochastic policy $\pi_{\mathrm{inst}}$, and our interest lies in the structural properties of these transitions after reward-based optimization.

\begin{assumption}[Reward Structure]\label{ass:reward_sym}
For all state pairs $(f,g)$ under consideration, the reward admits a potential-based decomposition:
\[
r(f,g) = h(g) - h(f),
\]
where $h : \mathcal{S} \to \mathbb{R}$ is a scalar (potential) function.
\end{assumption}

This assumption encodes a particularly important class of rewards. Rather than assigning arbitrary scores to transitions, the reward measures relative improvement between states with respect to an underlying scalar objective $h$. Such potential-based rewards are standard in reinforcement learning theory and ensure that the reward does not introduce cyclic inconsistencies. In the context of instruction tuning, $h$ can be interpreted as a latent notion of answer quality, alignment, or correctness, and $r(f,g)$ quantifies how much better state $g$ is compared to $f$.

\begin{assumption}\label{ass:pretrain_sym}
For all state pairs $(f,g)$ under consideration,
\[
\log\frac{\pi_{\mathrm{pre}}(g \mid f)}{\pi_{\mathrm{pre}}(f \mid g)}
=
\log\frac{p_{\mathrm{data}}(g)}{p_{\mathrm{data}}(f)}.
\]
\end{assumption}

This assumption is natural when the pretrained model $\pi_{\mathrm{pre}}$ is a good approximation to the data-generating distribution. In particular, if $\pi_{\mathrm{pre}}(\cdot \mid \cdot) \approx p_{\mathrm{data}}(\cdot \mid \cdot)$, then
\[
\log\frac{\pi_{\mathrm{pre}}(g \mid f)}{\pi_{\mathrm{pre}}(f \mid g)}
\approx
\log\frac{p_{\mathrm{data}}(g \mid f)}{p_{\mathrm{data}}(f \mid g)}
=
\log\frac{p_{\mathrm{data}}(f,g)}{p_{\mathrm{data}}(g,f)}
+
\log\frac{p_{\mathrm{data}}(g)}{p_{\mathrm{data}}(f)}.
\]
Under appropriate coarse-graining (e.g., at the level of semantics, topics, or functional states), if the joint distribution is approximately symmetric,
$p_{\mathrm{data}}(f,g) \approx p_{\mathrm{data}}(g,f)$, or if the generation process is approximately reversible at this scale, then the first term vanishes. What remains is precisely the marginal log-density difference, which motivates Assumption~\ref{ass:pretrain_sym}.

Following notation in \cite{song2025detailed}, we define the transition kernel induced by the instruction-tuned model as
\[
T(g \mid f) = \pi_{\mathrm{inst}}(g \mid f).
\]
We are now ready to state the main structural result of this section.

\begin{theorem}\label{thm:db}
If Assumptions~\ref{ass:reward_sym} and~\ref{ass:pretrain_sym} hold, then there exists a potential function $V : \mathcal{S} \to \mathbb{R}$ such that, for all states under consideration,
\begin{equation}\label{eq:DB}
\log\frac{T(g \mid f)}{T(f \mid g)} = V(f) - V(g).
\end{equation}
\end{theorem}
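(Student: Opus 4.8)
The plan is to substitute the closed-form optimal policy from the Preliminary section directly into the detailed-balance log-ratio and show that every surviving term is separable into single-state pieces. Since $\pi_{\mathrm{inst}}$ is obtained by optimizing the reward $r$ against the reference $\pi_{\mathrm{pre}}$, the variational optimality derived earlier gives the kernel in EBM form,
\[
T(g\mid f) = \pi_{\mathrm{inst}}(g\mid f) = \frac{1}{Z(f)}\,\pi_{\mathrm{pre}}(g\mid f)\,\exp\!\left(\tfrac{1}{\beta} r(f,g)\right),
\]
where $Z(f)=\sum_{g'}\pi_{\mathrm{pre}}(g'\mid f)\exp(\tfrac{1}{\beta}r(f,g'))$ normalizes the outgoing transitions from state $f$, and the analogous expression with $f\leftrightarrow g$ holds for $T(f\mid g)$.

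First I would expand the log-ratio term by term,
\[
\log\frac{T(g\mid f)}{T(f\mid g)} = \log\frac{Z(g)}{Z(f)} + \log\frac{\pi_{\mathrm{pre}}(g\mid f)}{\pi_{\mathrm{pre}}(f\mid g)} + \frac{1}{\beta}\bigl(r(f,g)-r(g,f)\bigr),
\]
and then simplify the last two terms using the hypotheses. Assumption~\ref{ass:reward_sym} makes the reward antisymmetric, so $r(f,g)-r(g,f)=2\bigl(h(g)-h(f)\bigr)$, while Assumption~\ref{ass:pretrain_sym} replaces the pretraining log-ratio by $\log p_{\mathrm{data}}(g)-\log p_{\mathrm{data}}(f)$. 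Regrouping so that all $g$-dependent quantities collect together and all $f$-dependent ones collect together yields
\[
\log\frac{T(g\mid f)}{T(f\mid g)} = \Bigl[\log Z(g)+\log p_{\mathrm{data}}(g)+\tfrac{2}{\beta}h(g)\Bigr] - \Bigl[\log Z(f)+\log p_{\mathrm{data}}(f)+\tfrac{2}{\beta}h(f)\Bigr].
\]
The potential can then be read off directly: setting $V(s) := -\log Z(s)-\log p_{\mathrm{data}}(s)-\tfrac{2}{\beta}h(s)$ turns the right-hand side into exactly $V(f)-V(g)$, establishing Eq.~\eqref{eq:DB}.

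The calculation is essentially mechanical, so the one place that warrants care is the treatment of the state-dependent partition functions $Z(f)$ and $Z(g)$: unlike the usual global normalizer, these do not cancel, and the result only goes through because normalization is carried out independently for each source state, so that $Z$ enters the log-ratio solely through the separable single-state combination $\log Z(g)-\log Z(f)$. Absorbing $\log Z$, the marginal log-density, and the quality potential $h$ into the single scalar $V$ is precisely what produces the detailed-balance structure, and it identifies the induced stationary law as $\propto \exp(-V(s)) = Z(s)\,p_{\mathrm{data}}(s)\exp(\tfrac{2}{\beta}h(s))$, in which higher quality $h$ and higher data likelihood both lower the effective energy.
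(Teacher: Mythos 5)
Your proof is correct and follows essentially the same route as the paper: substitute the closed-form EBM policy, expand the log-ratio, apply the two assumptions, and read off the separable potential $V(s)$ combining $\log Z$, $\log p_{\mathrm{data}}$, and $h$. One small point in your favor: you correctly compute $r(f,g)-r(g,f)=2\bigl(h(g)-h(f)\bigr)$, whereas the paper's displayed derivation drops the factor of $2$ (writing $h(g)-h(f)$ and hence $-h(s)/\beta$ in its $V$); this does not affect the validity of the theorem, which only asserts existence of some potential.
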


\begin{proof}
Recall that under KL-regularized reinforcement learning, the optimal policy induced by the reward $r$ takes the form
\[
\pi_{\mathrm{inst}}(g \mid f)
=
\frac{1}{Z(f)} \, \pi_{\mathrm{pre}}(g \mid f)
\exp\!\left( \frac{r(f,g)}{\beta} \right),
\]
where
\[
Z(f) = \sum_{g'} \pi_{\mathrm{pre}}(g' \mid f)
\exp\!\left( \frac{r(f,g')}{\beta} \right)
\]
is the partition function ensuring normalization.

By direct algebraic manipulation, the log-ratio between the forward and backward transition probabilities can be written exactly as
\begin{equation}
\log \frac{T(g \mid f)}{T(f \mid g)}
=
\frac{1}{\beta} \big( r(f,g) - r(g,f) \big)
+
\log \frac{\pi_{\mathrm{pre}}(g \mid f)}{\pi_{\mathrm{pre}}(f \mid g)}
-
\big( \log Z(f) - \log Z(g) \big).
\label{eq:exact_ratio}
\end{equation}

Substituting Assumption~\ref{ass:reward_sym} and Assumption~\ref{ass:pretrain_sym} into~\eqref{eq:exact_ratio}, we obtain
\begin{align*}
\log \frac{T(g \mid f)}{T(f \mid g)}
&=
\frac{1}{\beta} \big( -h(f) + h(g) \big)
+
\big( \log p_{\mathrm{data}}(g) - \log p_{\mathrm{data}}(f) \big)
-
\big( \log Z(f) - \log Z(g) \big) \\
&=
\left(
\frac{-h(f)}{\beta}
-
\log p_{\mathrm{data}}(f)
-
\log Z(f)
\right)
-
\left(
\frac{-h(g)}{\beta}
-
\log p_{\mathrm{data}}(g)
-
\log Z(g)
\right).
\end{align*}

We now define the \textbf{potential function} $V : \mathcal{S} \to \mathbb{R}$ by
\[
V(s)
=
\frac{-h(s)}{\beta}
-
\log p_{\mathrm{data}}(s)
-
\log Z(s).
\]
Substituting this definition back into the expression above yields
\[
\log \frac{T(g \mid f)}{T(f \mid g)} = V(f) - V(g),
\]
which proves the claim. Equivalently, this relation can be written in the multiplicative detailed balance form
\[
\frac{T(g \mid f)}{T(f \mid g)} = \exp\!\big( V(f) - V(g) \big).
\]
\end{proof}

\begin{corollary}
When $h = 0$, the result applies to the special case studied in \cite{song2025detailed}.
\end{corollary}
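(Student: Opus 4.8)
The plan is to treat the corollary as a direct specialization of \Cref{thm:db}, tracking how the potential $V$ constructed in that proof collapses once the reward potential $h$ is set to zero. First I would return to \Cref{ass:reward_sym}: with $h \equiv 0$ the reward vanishes identically, since $r(f,g) = h(g) - h(f) = 0$ for every admissible pair $(f,g)$. This trivializes the exponential tilting factor in the optimal policy, as $\exp(r(f,g)/\beta) = 1$, so that $\pi_{\mathrm{inst}}(g \mid f)$ reduces to a renormalization of $\pi_{\mathrm{pre}}(g \mid f)$.

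The key computational step is then to observe that the partition function degenerates. Substituting $r \equiv 0$ gives $Z(f) = \sum_{g'} \pi_{\mathrm{pre}}(g' \mid f)\exp(0) = \sum_{g'} \pi_{\mathrm{pre}}(g' \mid f) = 1$, because $\pi_{\mathrm{pre}}(\cdot \mid f)$ is already a normalized conditional distribution. Hence $\log Z(s) = 0$ for all $s$, and the potential defined in the proof of \Cref{thm:db} simplifies to $V(s) = -\log p_{\mathrm{data}}(s)$, with both the $-h(s)/\beta$ term and the $-\log Z(s)$ term dropping out.

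Plugging this reduced $V$ back into the detailed balance relation \eqref{eq:DB} then yields $\log \frac{T(g \mid f)}{T(f \mid g)} = V(f) - V(g) = \log \frac{p_{\mathrm{data}}(g)}{p_{\mathrm{data}}(f)}$, which rearranges to the classical reversibility identity $p_{\mathrm{data}}(f)\, T(g \mid f) = p_{\mathrm{data}}(g)\, T(f \mid g)$. This is exactly detailed balance with respect to the data distribution, the form empirically reported in \cite{song2025detailed}, so the only remaining step is to identify the two expressions.

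I do not anticipate a genuine technical obstacle here, since the argument is a single substitution followed by a normalization observation. The one point requiring care is interpretive rather than computational: I would want to confirm that the special case treated in \cite{song2025detailed} indeed corresponds to the reward-free (pure base-model) transition kernel, so that matching the reduced potential $V(s) = -\log p_{\mathrm{data}}(s)$ to their observed quantity is faithful, and to check that the identity $Z \equiv 1$ remains legitimate under the coarse-graining used in \Cref{ass:pretrain_sym}, i.e.\ that normalization of $\pi_{\mathrm{pre}}$ survives passage to the abstracted state space.
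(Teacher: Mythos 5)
Your specialization is correct and is exactly the argument the paper leaves implicit (it states the corollary without proof): with $h\equiv 0$ the reward vanishes, $Z\equiv 1$, the potential collapses to $V(s)=-\log p_{\mathrm{data}}(s)$, and \eqref{eq:DB} becomes reversibility of $T$ with respect to $p_{\mathrm{data}}$, which is the relation reported in \cite{song2025detailed}. Your closing caveats are reasonable but not load-bearing; note also that in this degenerate case $T=\pi_{\mathrm{inst}}=\pi_{\mathrm{pre}}$, so the corollary is essentially a restatement of Assumption~\ref{ass:pretrain_sym}.
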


\subsection{The Case of Self-Correction}

In general self-correction tasks \cite{kumar2024training}, the function $h$ is non-zero and typically biased toward improved states, meaning that $h(g) > h(f)$ when $g$ represents a refined or corrected version of $f$. Through its appearance in the definition of the potential $V$, this improvement-oriented structure induces a negative shift in $V$, and lower potential values are generally associated with better-performing states. By tracking how the potential evolves along chains of self-correction, one can further derive monotonicity and convergence properties, which will be formalized in the following theorem.

Let $\pi(s) \propto \exp(-V(s))$ be the stationary distribution, and recall the transition kernel $T(g \mid f)$ satisfy the detailed balance condition:
\begin{equation}
    \pi(f) T(g \mid f) = \pi(g) T(f \mid g), \quad \forall f, g \in \mathcal{S}.
    \label{eq:detailed_balance}
\end{equation}

We now prove that the distribution of the system states, $P_t$, strictly approaches the target distribution $\pi$ over time in terms of the Kullback-Leibler (KL) divergence. 

\begin{theorem}[Monotonic Decrease of KL Divergence]
Let $P_t$ denote the probability distribution of the state at time $t$, evolving according to the master equation $P_{t+1}(g) = \sum_f P_t(f) T(g \mid f)$. The KL divergence with respect to the target $\pi$ is non-increasing:
\begin{equation}
    \mathrm{KL}(P_{t+1} \| \pi) \le \mathrm{KL}(P_t \| \pi).
\end{equation}
\end{theorem}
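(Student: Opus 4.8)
The plan is to prove this monotonicity as a concrete instance of the data processing inequality for relative entropy, realized through the log-sum inequality. The essential mechanism is that $\pi$ is left invariant by the kernel $T$, so both $P_t$ and $\pi$ are pushed forward by the same stochastic map at each step, and relative entropy cannot increase under such a map.

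First I would record the stationarity of $\pi$. Although the theorem is phrased under detailed balance, what the argument actually needs is $\sum_f \pi(f) T(g\mid f) = \pi(g)$ for every $g$. This follows immediately from the detailed balance relation $\pi(f)T(g\mid f) = \pi(g)T(f\mid g)$ by summing over $f$ and using $\sum_f T(f\mid g) = 1$. Thus $\pi$ evolves to itself under the master equation, exactly as $P_t$ evolves to $P_{t+1}$, so comparing the two amounts to comparing the images of two distributions under one common kernel.

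Next I would expand $\mathrm{KL}(P_{t+1}\|\pi) = \sum_g P_{t+1}(g)\log\frac{P_{t+1}(g)}{\pi(g)}$ and substitute the two mixture representations $P_{t+1}(g)=\sum_f P_t(f)T(g\mid f)$ and $\pi(g)=\sum_f \pi(f)T(g\mid f)$. The crux is then the log-sum inequality: for nonnegative sequences $(a_f),(b_f)$ one has $\bigl(\sum_f a_f\bigr)\log\frac{\sum_f a_f}{\sum_f b_f} \le \sum_f a_f \log\frac{a_f}{b_f}$, which is a direct consequence of the convexity of $t\mapsto t\log t$ (equivalently, Jensen's inequality). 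Applying it with $a_f = P_t(f)T(g\mid f)$ and $b_f = \pi(f)T(g\mid f)$ gives, for each fixed $g$,
\[
P_{t+1}(g)\log\frac{P_{t+1}(g)}{\pi(g)} \le \sum_f P_t(f)T(g\mid f)\log\frac{P_t(f)}{\pi(f)},
\]
where the shared factor $T(g\mid f)$ cancels inside the logarithm.

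Finally I would sum this bound over all $g$, interchange the order of summation, pull out $\log\frac{P_t(f)}{\pi(f)}$, and use $\sum_g T(g\mid f)=1$ to collapse the kernel. This yields $\mathrm{KL}(P_{t+1}\|\pi)\le \sum_f P_t(f)\log\frac{P_t(f)}{\pi(f)} = \mathrm{KL}(P_t\|\pi)$, as claimed. The main obstacle is really only the log-sum inequality; once it is in hand, the remaining computation is mechanical. I would also note that the argument uses solely the invariance $\pi T = \pi$, so detailed balance is a convenient sufficient condition rather than a strictly necessary one — reversibility is what guarantees that the distribution $\pi(s)\propto\exp(-V(s))$ supplied by \Cref{thm:db} is genuinely stationary.
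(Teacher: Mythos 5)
Your proof is correct, and it reaches the same inequality as the paper but by a slightly different packaging of the same convexity fact. The paper uses detailed balance twice: first to rewrite $P_{t+1}(g)/\pi(g)=\sum_f T(f\mid g)\,P_t(f)/\pi(f)$ as a convex combination under the \emph{reversed} kernel, then Jensen's inequality for $\phi(x)=x\log x$, then detailed balance again to convert $\pi(g)T(f\mid g)$ back to $\pi(f)T(g\mid f)$ and collapse the sum. You instead invoke only the invariance $\pi T=\pi$ (which you correctly derive from detailed balance) and apply the log-sum inequality directly to the two mixture representations $P_{t+1}(g)=\sum_f P_t(f)T(g\mid f)$ and $\pi(g)=\sum_f \pi(f)T(g\mid f)$; after cancelling the common factor $T(g\mid f)$ inside the logarithm, the resulting intermediate bound $\sum_f P_t(f)T(g\mid f)\log\bigl(P_t(f)/\pi(f)\bigr)$ is term-for-term identical to what the paper obtains after its second application of detailed balance. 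What your route buys is the explicit observation that reversibility is not needed for the monotonicity itself --- it is the standard data-processing argument, valid for any kernel leaving $\pi$ invariant --- with detailed balance serving only to certify that $\pi(s)\propto\exp(-V(s))$ is in fact stationary. One small point worth stating explicitly if you write this up: in the log-sum step you should handle the terms with $T(g\mid f)=0$ (and the usual $0\log 0=0$ convention) so that the cancellation inside the logarithm is legitimate; this is routine but should be said.
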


\begin{proof}
Consider the convex function $\phi(x) = x \log x$. The KL divergence can be written as:
\begin{equation*}
    D_{\mathrm{KL}}(P_{t+1} \| \pi) = \sum_{g \in \mathcal{S}} \pi(g) \, \phi\left( \frac{P_{t+1}(g)}{\pi(g)} \right).
\end{equation*}
We expand the ratio $P_{t+1}(g)/\pi(g)$ using the following equation:
\begin{equation*}
    \frac{P_{t+1}(g)}{\pi(g)} = \frac{1}{\pi(g)} \sum_{f} P_t(f) T(g \mid f) = \sum_{f} \frac{P_t(f)}{\pi(f)} \frac{\pi(f) T(g \mid f)}{\pi(g)}.
\end{equation*}
Using the detailed balance condition $\frac{\pi(f) T(g \mid f)}{\pi(g)} = T(f \mid g)$, we obtain:
\begin{equation*}
    \frac{P_{t+1}(g)}{\pi(g)} = \sum_{f} T(f \mid g) \left( \frac{P_t(f)}{\pi(f)} \right).
\end{equation*}
This expresses the ratio at $t+1$ as a convex combination of ratios at $t$ (since $\sum_f T(f \mid g) = 1$). Applying Jensen's inequality for the convex function $\phi$:
\begin{align*}
    \mathrm{KL}(P_{t+1} \| \pi) &= \sum_{g} \pi(g) \, \phi\left( \sum_{f} T(f \mid g) \frac{P_t(f)}{\pi(f)} \right) \\
    &\le \sum_{g} \pi(g) \sum_{f} T(f \mid g) \, \phi\left( \frac{P_t(f)}{\pi(f)} \right).
\end{align*}
We now swap the summation order and apply detailed balance ($\pi(g) T(f \mid g) = \pi(f) T(g \mid f)$) again:
\begin{align*}
    \text{RHS} &= \sum_{f,g} \pi(f) T(g \mid f) \, \phi\left( \frac{P_t(f)}{\pi(f)} \right) \\
    &= \sum_{f} \pi(f) \, \phi\left( \frac{P_t(f)}{\pi(f)} \right) \underbrace{\sum_{g} T(g \mid f)}_{=1} \\
    &= \sum_{f} \pi(f) \left( \frac{P_t(f)}{\pi(f)} \log \frac{P_t(f)}{\pi(f)} \right) \\
    &= \sum_{f} P_t(f) \log \frac{P_t(f)}{\pi(f)} \\
    &= \mathrm{KL}(P_t \| \pi).
\end{align*}
Thus, $\mathrm{KL}(P_{t+1} \| \pi) \le \mathrm{KL}(P_t \| \pi)$.
\end{proof}

From the above theorem the distribution are moving towards the stationary distribution $\pi$. When it arrives at the stationary distribution $\pi$, while the single-step drift at a specific state may be non-zero (driving the system towards lower potential), the expected drift over the stationary distribution must vanish. So we may usually observe that the model stays in lower potential regime (high $\pi$ probability) and usually the lower potential reflects a better performance.

\begin{proposition}[Zero Global Mean Drift]
Let the single-step potential drift at state $f$ be defined as $\Delta(f) = \mathbb{E}[V(X_{t+1}) - V(X_t) \mid X_t = f]$. Under the stationary distribution $\pi$, the global expected drift is zero:
\begin{equation}
    \mathbb{E}_{f \sim \pi} \big[ \Delta(f) \big] = 0.
\end{equation}
\end{proposition}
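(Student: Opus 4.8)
The plan is to expand the drift into its definition and then exploit stationarity of $\pi$---which is guaranteed by the detailed balance relation \eqref{eq:detailed_balance}---to cancel the two resulting terms. First I would write the conditional drift out explicitly. Since $\sum_g T(g \mid f) = 1$, the additive constant $V(f)$ factors out cleanly:
\[
\Delta(f) = \sum_g T(g \mid f)\big(V(g) - V(f)\big) = \sum_g T(g \mid f)\,V(g) \;-\; V(f).
\]
Averaging this over the stationary distribution then gives
\[
\mathbb{E}_{f \sim \pi}\big[\Delta(f)\big]
= \sum_{f,g} \pi(f)\,T(g \mid f)\,V(g)
\;-\;
\sum_f \pi(f)\,V(f).
\]

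The second step is to simplify the double sum. The key observation is that detailed balance implies $\pi$ is invariant: summing the identity $\pi(f)T(g\mid f) = \pi(g)T(f\mid g)$ over $f$ and using $\sum_f T(f\mid g) = 1$ yields $\sum_f \pi(f)\,T(g\mid f) = \pi(g)$. Substituting this into the first term collapses the inner sum over $f$, so that
\[
\sum_{f,g} \pi(f)\,T(g \mid f)\,V(g) = \sum_g \pi(g)\,V(g).
\]
This is identical to the second term up to the name of the summation index, and the two therefore cancel exactly, giving $\mathbb{E}_{f \sim \pi}[\Delta(f)] = 0$.

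I do not anticipate a genuine obstacle here: the statement is a direct consequence of invariance, and detailed balance is more than sufficient to supply it. The only point deserving a moment of care is conceptual rather than technical---one must resist evaluating the drift pointwise, since $\Delta(f)$ need not vanish at any individual state (indeed the surrounding discussion stresses that the single-step drift generally points toward lower potential). It is precisely the $\pi$-weighted average that vanishes, and the mechanism is the stationarity-induced swap $\pi(f)T(g\mid f)=\pi(g)T(f\mid g)$ that trades the ``forward'' weighting of $V(g)$ for the marginal $\pi(g)$.
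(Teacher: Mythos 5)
Your proof is correct and follows essentially the same route as the paper: both expand $\mathbb{E}_{\pi}[\Delta]$ into the double sum and the marginal term, and both reduce the double sum to $\sum_g \pi(g)V(g)$ via detailed balance combined with $\sum_f T(f\mid g)=1$. The only difference is cosmetic---you package that reduction as the invariance identity $\sum_f \pi(f)T(g\mid f)=\pi(g)$ before substituting, while the paper applies the swap $\pi(f)T(g\mid f)=\pi(g)T(f\mid g)$ directly inside the sum.
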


\begin{proof}
We expand the total expectation of the drift:
\begin{align*}
    \mathbb{E}_{f \sim \pi} [\Delta(f)] &= \sum_{f \in \mathcal{S}} \pi(f) \sum_{g \in \mathcal{S}} T(g \mid f) \big( V(g) - V(f) \big) \\
    &= \underbrace{\sum_{f,g} \pi(f) T(g \mid f) V(g)}_{\text{Term A}} - \underbrace{\sum_{f,g} \pi(f) T(g \mid f) V(f)}_{\text{Term B}}.
\end{align*}
We analyze the two terms separately:

\textbf{Term A:} We apply the detailed balance condition \eqref{eq:detailed_balance}:
\begin{align*}
    \text{Term A} &= \sum_{f,g} \pi(g) T(f \mid g) V(g) \\
    &= \sum_{g} \pi(g) V(g) \underbrace{\sum_{f} T(f \mid g)}_{=1} \\
    &= \mathbb{E}_{\pi}[V].
\end{align*}
\textbf{Term B:} We use the normalization of the transition probability $\sum_g T(g \mid f) = 1$:
\begin{align*}
    \text{Term B} &= \sum_{f} \pi(f) V(f) \underbrace{\sum_{g} T(g \mid f)}_{=1} \\
    &= \mathbb{E}_{\pi}[V].
\end{align*}
Substituting these back into the original equation:
\begin{equation*}
    \mathbb{E}_{f \sim \pi} [\Delta(f)] = \mathbb{E}_{\pi}[V] - \mathbb{E}_{\pi}[V] = 0.
\end{equation*}
\end{proof}

The previous discussions mainly focus on qualitative arguments, so we will move on to some quantitative arguments. We can further derive the bound on the time required to reach a target set defined by a low potential value.
        
        \begin{theorem}[Bound on Hitting Time]
        \label{thm:hitting_time}
        Let $m = min_{s \in \mathcal{S}} V(s)$ be the global minimum of the potential. For a threshold $b \in \mathbb{R}$, define the target set $B = \{x \in \mathcal{S} : V(x) \le b\}$. Let $\tau_B = inf\{t \ge 0 : X_t \in B\}$ be the first hitting time of the set $B$.
        
        Assume there exists a constant $\gamma > 0$ such that for all states $f \notin B$:
        \begin{equation*}
            \Delta(f) \le -\gamma.
        \end{equation*}
        Then, for any initial state $X_0 = f$, the expected hitting time is bounded by:
        \begin{equation}
            \mathbb{E}_f[\tau_B] \le \frac{V(f) - m}{\gamma}.
        \end{equation}
        \end{theorem}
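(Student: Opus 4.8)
The plan is to turn the potential $V$ into a Lyapunov supermartingale by compensating for the guaranteed drift, and then to read off the bound via monotone convergence. The natural candidate is the stopped, compensated process
\[
M_t := V\big(X_{t\wedge\tau_B}\big) + \gamma\,(t\wedge\tau_B),
\]
where the additive term $\gamma\,(t\wedge\tau_B)$ is chosen precisely to cancel the per-step decrease $-\gamma$ that the drift hypothesis guarantees outside $B$, so that a strict decrease of $V$ becomes a supermartingale inequality.

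First I would verify the supermartingale property $\mathbb{E}[M_{t+1}\mid \mathcal{F}_t]\le M_t$ by conditioning on whether the chain has already entered $B$. On the event $\{\tau_B\le t\}$ both $X_{t\wedge\tau_B}$ and $t\wedge\tau_B$ are frozen, so $M_{t+1}=M_t$ and equality holds. On the event $\{\tau_B> t\}$ we have $X_t\notin B$ and $(t+1)\wedge\tau_B=t+1$, whence, using the Markov property and the hypothesis $\Delta(f)\le-\gamma$ for $f\notin B$,
\[
\mathbb{E}[M_{t+1}-M_t\mid \mathcal{F}_t]
= \mathbb{E}\big[V(X_{t+1})-V(X_t)\mid X_t\big] + \gamma
= \Delta(X_t)+\gamma \le 0 .
\]
Combining the two cases gives $\mathbb{E}[M_{t+1}\mid\mathcal{F}_t]\le M_t$, hence $\mathbb{E}_f[M_t]\le \mathbb{E}_f[M_0]=V(f)$ for $X_0=f$. (The case $f\in B$ is immediate, since then $\tau_B=0$ and the right-hand side of the claim is nonnegative because $V(f)\ge m$.)

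Next I would extract the bound using the uniform lower bound $V\ge m$. Since $V(X_{t\wedge\tau_B})\ge m$, we get $M_t\ge m+\gamma\,(t\wedge\tau_B)$, and therefore
\[
m+\gamma\,\mathbb{E}_f[\,t\wedge\tau_B\,]\le \mathbb{E}_f[M_t]\le V(f),
\qquad\text{i.e.}\qquad
\mathbb{E}_f[\,t\wedge\tau_B\,]\le \frac{V(f)-m}{\gamma},
\]
uniformly in $t$. Finally, because $t\wedge\tau_B$ increases monotonically to $\tau_B$ as $t\to\infty$, monotone convergence yields $\mathbb{E}_f[\tau_B]\le (V(f)-m)/\gamma$, which is the claimed bound (and incidentally shows $\tau_B<\infty$ almost surely).

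The main obstacle is not a hard computation but the careful bookkeeping around the stopping time: one must freeze the process at $\tau_B$ correctly so that the drift hypothesis, which holds only off $B$, is applied exactly on $\{\tau_B>t\}$, and one must pass to the limit through $\mathbb{E}_f[\,t\wedge\tau_B\,]\to\mathbb{E}_f[\tau_B]$ by monotone convergence rather than by a naive optional-stopping theorem, since a priori $\tau_B$ is unbounded. A minor point worth stating explicitly is that $m=\min_{s}V(s)$ is assumed finite and attained, so that the uniform lower bound $V\ge m$ is genuinely available.
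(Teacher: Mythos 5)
Your proof is correct and is essentially the paper's argument in supermartingale packaging: the compensated process $M_t = V(X_{t\wedge\tau_B}) + \gamma\,(t\wedge\tau_B)$ being a supermartingale is exactly what the paper obtains by telescoping the increment bound $\mathbb{E}[Y_{t+1}-Y_t\mid\mathcal{F}_t]\le -\gamma\,\mathbf{1}_{\{t<\tau_B\}}$ for $Y_t = V(X_{t\wedge\tau_B})$. The two-case analysis around the stopping time, the lower bound $V\ge m$, and the monotone-convergence passage $\mathbb{E}_f[t\wedge\tau_B]\to\mathbb{E}_f[\tau_B]$ all coincide with the paper's proof.
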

        
        \begin{proof}
        Let $\mathcal{F}_t$ denote the filtration generated by the history $\{X_0, \dots, X_t\}$. Since the hitting time $\tau_B$ is a random variable and potentially infinite, we introduce the stopped process $Y_t$ defined by:
        \begin{equation*}
            Y_t = V(X_{t \wedge \tau_B}),
        \end{equation*}
        where $t \wedge \tau_B = \min(t, \tau_B)$. Since the state space $\mathcal{S}$ is finite, the function $V$ is bounded, ensuring that the sequence $Y_t$ is well-defined and bounded.
        
        We analyze the conditional expectation of the increment $\mathbb{E}[Y_{t+1} - Y_t \mid \mathcal{F}_t]$. We distinguish two cases based on the relationship between the time $t$ and the stopping time $\tau_B$:
        
        \textbf{Case 1:} If $t \ge \tau_B$, then the process has already stopped. We have $X_{t \wedge \tau_B} = X_{\tau_B}$ and $X_{(t+1) \wedge \tau_B} = X_{\tau_B}$. Thus, the increment is zero:
        \begin{equation*}
            Y_{t+1} - Y_t = 0.
        \end{equation*}
        
        \textbf{Case 2:} If $t < \tau_B$, then $X_t \notin B$. By the definition of the stopped process, $Y_t = V(X_t)$. The next state is $X_{t+1}$ (which may or may not be in $B$). Applying the {drift condition}, we have:
        \begin{equation*}
            \mathbb{E}[Y_{t+1} - Y_t \mid \mathcal{F}_t, t < \tau_B] = \Delta(X_t) \le -\gamma.
        \end{equation*}
        
        We can combine these two cases using the indicator function $\mathbf{1}_{\{t < \tau_B\}}$, which is measurable with respect to $\mathcal{F}_t$:
        \begin{equation*}
            \mathbb{E}[Y_{t+1} - Y_t \mid \mathcal{F}_t] \le -\gamma \cdot \mathbf{1}_{\{t < \tau_B\}}.
        \end{equation*}
        Taking the total expectation on both sides of the inequality yields:
        \begin{equation*}
            \mathbb{E}[Y_{t+1}] - \mathbb{E}[Y_t] \le -\gamma \mathbb{P}(t < \tau_B).
        \end{equation*}
        We sum this inequality over the time steps $t = 0, \dots, n-1$:
        \begin{equation*}
            \sum_{t=0}^{n-1} \left( \mathbb{E}[Y_{t+1}] - \mathbb{E}[Y_t] \right) \le -\gamma \sum_{t=0}^{n-1} \mathbb{P}(t < \tau_B).
        \end{equation*}
        The left-hand side is a telescoping sum which simplifies to $\mathbb{E}[Y_n] - \mathbb{E}[Y_0]$. Since $X_0 = f$ and $0 < \tau_B$ (assuming $f \notin B$), $\mathbb{E}[Y_0] = V(f)$.
        The right-hand side can be rewritten using the identity for the expectation of non-negative integer-valued random variables:
        \begin{equation*}
            \sum_{t=0}^{n-1} \mathbb{P}(t < \tau_B) = \sum_{t=0}^{n-1} \mathbb{E}[\mathbf{1}_{\{t < \tau_B\}}] = \mathbb{E}\left[ \sum_{t=0}^{n-1} \mathbf{1}_{\{t < \tau_B\}} \right] = \mathbb{E}[\min(n, \tau_B)].
        \end{equation*}
        Substituting these back into the inequality, we obtain:
        \begin{equation*}
            \mathbb{E}[Y_n] - V(f) \le -\gamma \mathbb{E}[n \wedge \tau_B].
        \end{equation*}
        Rearranging the terms yields:
        \begin{equation*}
            \gamma \mathbb{E}[n \wedge \tau_B] \le V(f) - \mathbb{E}[Y_n].
        \end{equation*}
        By the definition of the global minimum $m$, we have $Y_n = V(X_{n \wedge \tau_B}) \ge m$. Therefore, $\mathbb{E}[Y_n] \ge m$. Substituting this lower bound into the inequality gives:
        \begin{equation*}
            \gamma \mathbb{E}[n \wedge \tau_B] \le V(f) - m.
        \end{equation*}
        Finally, we take the limit as $n \to \infty$. The sequence of random variables $Z_n = n \wedge \tau_B$ is non-negative and monotonically increasing towards $\tau_B$. By the Monotone Convergence Theorem, we have:
        \begin{equation*}
            \lim_{n \to \infty} \mathbb{E}[n \wedge \tau_B] = \mathbb{E}[\tau_B].
        \end{equation*}
        Consequently, we obtain the final bound:
        \begin{equation*}
            \gamma \mathbb{E}[\tau_B] \le V(f) - m \implies \mathbb{E}[\tau_B] \le \frac{V(f) - m}{\gamma}.
        \end{equation*}
        \end{proof}

      Then we will utilize Spectral Graph Theory to rigorously bound the convergence rate. We assume the state space graph $G=(\mathcal{S}, \mathcal{E})$ is connected and finite.

We define the {Expected Potential} at time $t$ as $L(t) = \mathbb{E}_{s \sim P_t}[V(s)]$, and the limit value as $L_\infty = \lim_{t \to \infty} L(t) = \mathbb{E}_{s \sim \pi}[V(s)]$.
        
\begin{theorem}[Absolute Spectral Gap Controls Convergence Rate]
    \label{thm:spectral_gap}
    Let \(\lambda_2\) be the second smallest eigenvalue of the normalized Laplacian \(\mathcal{L} = I - \mathbf{P}\), and let \(\rho = \max_{i \ge 2} |\mu_i|\), where \(\mu_i = 1 - \lambda_i\) are the eigenvalues of the self-adjoint transition operator \(\mathbf{P}\). For any initial distribution \(P_0\), the deviation of the expected potential decays exponentially:
    \begin{equation}
        |L(t) - L_\infty| \le \rho^t \cdot \sqrt{\mathrm{Var}_\pi(V)} \cdot \chi(P_0 \| \pi),
    \end{equation}
    where \(\chi(P_0 \| \pi) = \| P_0/\pi - 1 \|_{L^2(\pi)}\) is the chi-square distance between the initial and stationary distributions.
    \end{theorem}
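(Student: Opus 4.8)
The plan is to carry out the entire argument in the weighted Hilbert space $L^2(\pi)$ with inner product $\langle u,v\rangle_\pi = \sum_{s}\pi(s)u(s)v(s)$, and to track the density ratio $h_t = P_t/\pi$ rather than $P_t$ itself. The first step is to record that detailed balance \eqref{eq:detailed_balance} makes $\mathbf{P}$ self-adjoint on $L^2(\pi)$ and forces the density ratio to evolve by the same operator: a one-line computation using $\pi(f)T(g\mid f)=\pi(g)T(f\mid g)$ gives $h_{t+1}(g)=\sum_f T(f\mid g)h_t(f)=(\mathbf{P}h_t)(g)$, so that $h_t=\mathbf{P}^t h_0$. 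Because $\mathbf{P}\mathbf{1}=\mathbf{1}$ and $\langle h_0,\mathbf{1}\rangle_\pi=\sum_s P_0(s)=1$, the fluctuation $g_0 := h_0-\mathbf{1}=P_0/\pi-1$ is orthogonal to the constants, and self-adjointness then yields $h_t-\mathbf{1}=\mathbf{P}^t g_0$, with $\|g_0\|_\pi=\chi(P_0\|\pi)$ by definition.

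The second step rewrites the quantity of interest as an inner product. Writing $\tilde V = V - L_\infty$ for the centered potential, I would expand
\[
L(t)-L_\infty
= \sum_s \pi(s)\Big(\tfrac{P_t(s)}{\pi(s)}-1\Big)V(s)
= \langle h_t-\mathbf{1},\,V\rangle_\pi
= \langle \mathbf{P}^t g_0,\,\tilde V\rangle_\pi,
\]
where the last equality uses $\langle \mathbf{P}^t g_0,\mathbf{1}\rangle_\pi=\langle g_0,\mathbf{P}^t\mathbf{1}\rangle_\pi=\langle g_0,\mathbf{1}\rangle_\pi=0$, so replacing $V$ by $\tilde V$ changes nothing. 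The point of centering is that $\|\tilde V\|_\pi=\sqrt{\mathrm{Var}_\pi(V)}$, which is exactly the variance factor in the target bound.

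The third and final step combines Cauchy--Schwarz with the spectral theorem. Cauchy--Schwarz in $L^2(\pi)$ gives $|L(t)-L_\infty|\le \|\mathbf{P}^t g_0\|_\pi\,\|\tilde V\|_\pi$. To bound the first factor I would diagonalize the self-adjoint operator $\mathbf{P}$ in an orthonormal eigenbasis $\{\phi_i\}$ with $\phi_1=\mathbf{1}$ and eigenvalues $\mu_i=1-\lambda_i$; since $g_0\perp\mathbf{1}$ it expands as $g_0=\sum_{i\ge 2}c_i\phi_i$, whence $\|\mathbf{P}^t g_0\|_\pi^2=\sum_{i\ge 2}c_i^2\mu_i^{2t}\le \rho^{2t}\sum_{i\ge2}c_i^2=\rho^{2t}\|g_0\|_\pi^2$. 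Substituting the three factors produces exactly $|L(t)-L_\infty|\le \rho^t\sqrt{\mathrm{Var}_\pi(V)}\,\chi(P_0\|\pi)$.

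The only genuinely delicate point is this spectral contraction: it is essential that $g_0$ carries no component along the top eigenfunction $\mathbf{1}$ (guaranteed by the normalization $\sum_s P_0(s)=1$), so that the decay is controlled by $\rho=\max_{i\ge2}|\mu_i|$ rather than by $\mu_1=1$. This is also where finiteness and connectivity of the state-space graph enter, ensuring a complete orthonormal eigenbasis and a simple top eigenvalue; the implicit hypothesis $\rho<1$ (a nonzero absolute spectral gap) is what makes the right-hand side decay and is precisely what justifies identifying $L_\infty=\mathbb{E}_\pi[V]$ as the true limit.
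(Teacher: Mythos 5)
Your proof is correct and follows essentially the same route as the paper's: pass to the density ratio $P_t/\pi$ in $L^2(\pi)$, use self-adjointness from detailed balance to get the spectral contraction $\|\mathbf{P}^t(g_0-\mathbf{1})\|_\pi\le\rho^t\chi(P_0\|\pi)$ on the orthogonal complement of the constants, center $V$ so the inner product is unchanged, and finish with Cauchy--Schwarz. The only difference is that you spell out the one-line detailed-balance computation showing the ratio evolves by $\mathbf{P}$, which the paper asserts without proof.
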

    
    \begin{proof}
    Let \(\mathbf{P}\) be the transition operator acting on \(L^2(\pi)\). Under the detailed balance condition, \(\mathbf{P}\) is self-adjoint with eigenvalues \(1 = \mu_1 > \mu_2 \ge \dots \ge \mu_n \ge -1\), where the eigenfunction \(u_1\) associated with \(\mu_1\) is the constant \(\mathbf{1}\).
    
    Define the density ratio \(g_t = P_t / \pi\). The dynamics follow \(g_t = \mathbf{P}^t g_0\). The deviation from stationarity, \(g_t - 1\), is orthogonal to \(u_1\). Expanding in the eigenbasis, the norm on the orthogonal complement decays as:
    \begin{equation*}
        \| g_t - 1 \|_{L^2(\pi)} = \| \mathbf{P}^t (g_0 - 1) \|_{L^2(\pi)} \le \left( \max_{i \ge 2} |\mu_i| \right)^t \| g_0 - 1 \|_{L^2(\pi)} = \rho^t \| g_0 - 1 \|_{L^2(\pi)}.
    \end{equation*}
    Note that \(\| g_0 - 1 \|_{L^2(\pi)}\) is precisely \(\chi(P_0 \| \pi)\).
    
    We rewrite the potential expectation gap as an inner product in \(L^2(\pi)\):
    \begin{equation*}
        L(t) - L_\infty = \sum_{s} (P_t(s) - \pi(s)) V(s) = \sum_{s} \pi(s) (g_t(s) - 1) V(s) = \langle g_t - 1, V \rangle_\pi.
    \end{equation*}
    Since \(\langle g_t - 1, c \rangle_\pi = 0\) for any constant \(c\), we can replace \(V\) with the centered potential \(V - L_\infty\):
    \begin{equation*}
        L(t) - L_\infty = \langle g_t - 1, V - L_\infty \rangle_\pi.
    \end{equation*}
    
    Applying the Cauchy-Schwarz inequality:
    \begin{equation*}
        |L(t) - L_\infty| \le \| g_t - 1 \|_{L^2(\pi)} \cdot \| V - L_\infty \|_{L^2(\pi)}.
    \end{equation*}
    Substituting the bound and noting that \(\| V - L_\infty \|_{L^2(\pi)} = \sqrt{\mathrm{Var}_\pi(V)}\), we obtain:
    \begin{equation*}
        |L(t) - L_\infty| \le \rho^t \cdot \chi(P_0 \| \pi) \cdot \sqrt{\mathrm{Var}_\pi(V)}.
    \end{equation*}
    \end{proof}
        
While $\lambda_2$ is hard to compute, we can bound it from the following inequality.       
        \begin{theorem}[Poincaré Inequality]
        The variance of the potential under the stationary distribution is bounded by the expected squared gradient (Dirichlet form):
        \begin{equation}
            \mathrm{Var}_\pi(V) \le \frac{1}{\lambda_2} \mathbb{E}_{s \sim \pi} \left[ \frac{1}{2} \sum_{s'} T(s' \mid s) (V(s) - V(s'))^2 \right].
        \end{equation}
        \end{theorem}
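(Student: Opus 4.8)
The plan is to prove the Poincaré inequality by combining a Dirichlet-form identity with the spectral decomposition of the self-adjoint transition operator $\mathbf{P}$. Throughout I work in $L^2(\pi)$ with inner product $\langle f,g\rangle_\pi = \sum_s \pi(s) f(s) g(s)$, and I write $\mathcal{E}(V,V) = \langle V, \mathcal{L}V\rangle_\pi = \langle V, (I-\mathbf{P})V\rangle_\pi$ for the Dirichlet form associated with the normalized Laplacian $\mathcal{L} = I - \mathbf{P}$.

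First I would show that the right-hand side of the claimed inequality is exactly this Dirichlet form, i.e.
\[
\mathcal{E}(V,V) = \frac{1}{2}\sum_{s,s'}\pi(s)\,T(s'\mid s)\,(V(s)-V(s'))^2 = \mathbb{E}_{s\sim\pi}\!\left[\frac{1}{2}\sum_{s'} T(s'\mid s)(V(s)-V(s'))^2\right].
\]
To do this I expand the square and split into three sums. The $V(s)^2$ contribution collapses using row-normalization $\sum_{s'} T(s'\mid s)=1$; the cross term $-\sum_{s,s'}\pi(s)T(s'\mid s)V(s)V(s')$ is precisely $-\langle V, \mathbf{P}V\rangle_\pi$; and the $V(s')^2$ contribution is symmetrized back to $\sum_s \pi(s)V(s)^2$ using the detailed balance relation \eqref{eq:detailed_balance}, namely $\pi(s)T(s'\mid s)=\pi(s')T(s\mid s')$. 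Collecting terms yields $\sum_s\pi(s)V(s)^2 - \langle V,\mathbf{P}V\rangle_\pi = \langle V, (I-\mathbf{P})V\rangle_\pi$, as desired.

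Next I would invoke the spectral decomposition established in the proof of \Cref{thm:spectral_gap}. Since detailed balance makes $\mathbf{P}$ self-adjoint on $L^2(\pi)$, there is an orthonormal eigenbasis $\{u_i\}$ with $\mathbf{P}u_i = \mu_i u_i$, $u_1 = \mathbf{1}$, and Laplacian eigenvalues $\lambda_i = 1-\mu_i$ ordered $0 = \lambda_1 < \lambda_2 \le \cdots$. Writing $V = \sum_i c_i u_i$ with $c_i = \langle V,u_i\rangle_\pi$, the constant mode gives $c_1 = \mathbb{E}_\pi[V]$, so that $\mathrm{Var}_\pi(V) = \|V\|_\pi^2 - c_1^2 = \sum_{i\ge 2} c_i^2$, while $\mathcal{E}(V,V) = \sum_i \lambda_i c_i^2 = \sum_{i\ge 2}\lambda_i c_i^2$ because $\lambda_1 = 0$. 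Bounding each $\lambda_i \ge \lambda_2$ for $i\ge 2$ gives $\mathcal{E}(V,V) \ge \lambda_2 \sum_{i\ge 2} c_i^2 = \lambda_2\,\mathrm{Var}_\pi(V)$, and rearranging produces the stated bound.

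The only place reversibility is essential is the symmetrization step in the Dirichlet-form identity and, relatedly, the self-adjointness of $\mathbf{P}$ that licenses the real orthonormal eigenbasis; everything else is a routine expansion. The main conceptual point to get right is that the variance corresponds exactly to the projection of $V$ onto the orthogonal complement of the constant eigenfunction $u_1$, which is what allows the spectral sum to start at $i=2$ and the gap $\lambda_2$ rather than $\lambda_1 = 0$ to govern the bound.
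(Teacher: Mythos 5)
Your proposal is correct and follows essentially the same route as the paper: both reduce the claim to the Rayleigh-quotient characterization of $\lambda_2$ applied to the centered potential $V - \mathbb{E}_\pi[V]$, with the variance as the denominator and the Dirichlet form as the numerator. The only difference is that you explicitly verify the identity $\mathcal{E}(V,V)=\tfrac12\sum_{s,s'}\pi(s)T(s'\mid s)(V(s)-V(s'))^2$ via detailed balance and derive the variational principle from the eigenbasis expansion, whereas the paper asserts both as known facts.
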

        
        \begin{proof}
        The spectral gap $\lambda_2$ is characterized by the variational principle (Rayleigh quotient):
        \begin{equation*}
            \lambda_2 = \inf_{f \perp \mathbf{1}, f \neq 0} \frac{\langle f, \mathcal{L} f \rangle_\pi}{\langle f, f \rangle_\pi}.
        \end{equation*}
        Let $f = V - L_\infty$. Then $f \perp \mathbf{1}$ and $\langle f, f \rangle_\pi = \mathrm{Var}_\pi(V)$. The numerator is the Dirichlet form $\mathcal{E}(V,V)$. The inequality follows immediately from the definition of the infimum.
        \end{proof}

From the above inequality. we can find that the $\lambda_2$ for Gemini and Claude may be bigger than that of ChatGPT, so explains the convergence speed observed in \cite{song2025detailed}.

\section{Behavior of Reasoning Models}

Before discussing dynamics (time evolution), we first establish a fundamental static result:
{maximizing the RLVR objective is strictly equivalent to minimizing the KL divergence between the policy $\pi$ and the optimal distribution $\pi_{\mathrm{reas}}$.}

Let $\mathcal{D}$ denote the data distribution over prompts $x$.  
The RLVR objective can be seen as
\begin{equation}
\label{eq:global_objective}
\mathcal{J}(\pi)
\;=\;
\mathbb{E}_{x \sim \mathcal{D}}
\Big[
\mathbb{E}_{y \sim \pi(\cdot \mid x)} \big[ r(x,y) \big]
\;-\;
\beta \, \mathrm{KL}\!\left(
\pi(\cdot \mid x)\, \| \, \pi_{\mathrm{inst}}(\cdot \mid x)
\right)
\Big],
\end{equation}
where $\beta>0$ is the KL regularization coefficient.

We now derive a geometric representation of the reward via a dual (energy-based) identity.
Taking logarithms on both sides of the optimal policy yields
\begin{equation*}
\log \pi_{\mathrm{reas}}(y \mid x)
=
\log \pi_{\mathrm{inst}}(y \mid x)
+
\frac{1}{\beta} r(x,y)
-
\log Z(x).
\end{equation*}

Rearranging terms gives an explicit expression for the reward:
\begin{equation}
\label{eq:reward_identity}
r(x,y)
=
\beta \log \frac{\pi_{\mathrm{reas}}(y \mid x)}{\pi_{\mathrm{inst}}(y \mid x)}
+
\beta \log Z(x).
\end{equation}

Substituting \eqref{eq:reward_identity} into the reward term of
$\mathcal{J}(\pi)$ in \eqref{eq:global_objective}, we obtain
\begin{align*}
\mathbb{E}_{y \sim \pi} [ r(x,y) ]
&=
\beta \,
\mathbb{E}_{y \sim \pi}
\left[
\log \frac{\pi_{\mathrm{reas}}(y \mid x)}{\pi_{\mathrm{inst}}(y \mid x)}
\right]
+
\beta \log Z(x).
\end{align*}

Combining this with the KL regularization term, and noting that
\begin{equation*}
\mathrm{KL}(\pi \| \pi_{\mathrm{inst}})
=
\mathbb{E}_{y \sim \pi}
\left[
\log \frac{\pi(y \mid x)}{\pi_{\mathrm{inst}}(y \mid x)}
\right],
\end{equation*}
the $\log \pi_{\mathrm{inst}}$ contributions cancel exactly. After simplification, we arrive at
\begin{equation}
\label{eq:energy_identity}
\mathcal{J}(\pi)
=
\mathcal{J}(\pi_{\mathrm{reas}})
-
\beta \,
\mathbb{E}_{x \sim \mathcal{D}}
\left[
\mathrm{KL}\!\left(
\pi(\cdot \mid x)\, \| \, \pi_{\mathrm{reas}}(\cdot \mid x)
\right)
\right].
\end{equation}

Then Equation \eqref{eq:energy_identity} yields the central equivalence:
\begin{equation}
\label{eq:suboptimality_gap}
\mathcal{J}(\pi_{\mathrm{reas}}) - \mathcal{J}(\pi)
=
\beta \,
\mathbb{E}_{x \sim \mathcal{D}}
\left[
\mathrm{KL}\!\left(
\pi(\cdot \mid x)\, \| \, \pi_{\mathrm{reas}}(\cdot \mid x)
\right)
\right].
\end{equation}

We will then analysis the behaviour along trajectory, for simplicity we will consider the univariate Exponential Family (We will discuss why this is a suitable assumption in the end of this section).

We first summarize the basic assumptions we need.
\begin{itemize}
    \item \textbf{Optimization Path (Exponential Family):}
    \begin{equation}
        \pi_\lambda(y|x) = \frac{1}{Z_\lambda(x)} \pi_{\mathrm{inst}}(y|x) \exp\left( \lambda \cdot r(x,y) \right)
    \end{equation}
    where the \textbf{Energy Function} $Z_\lambda(x)$ is the normalization constant with $\{0,1\}$ reward $r$:
    \begin{equation}
        Z_\lambda(x) = \sum_{y \in \mathcal{Y}} \pi_{\mathrm{inst}}(y|x) \exp\left( \lambda \cdot r(x,y) \right)
    \end{equation}

    \item \textbf{Accuracy:}
    Accuracy is the expectation of the reward function under the current distribution:
    \begin{equation}
        R_\lambda(x) = \mathbb{E}_{y \sim \pi_\lambda(\cdot|x)} [r(x,y)] = \sum_{y \in \mathcal{Y}} \pi_\lambda(y|x) r(x,y)
    \end{equation}

    \item \textbf{Target Accuracy:}
    \begin{equation}
        R_{\mathrm{reas}}(x) = \mathbb{E}_{y \sim \pi_{\mathrm{reas}}(\cdot|x)} [r(x,y)]
    \end{equation}
\end{itemize}

Then we can prove the following theorem:
\begin{theorem}
\begin{equation}
    \mathrm{KL}(\pi_{\mathrm{reas}}(\cdot|x) \| \pi_\lambda(\cdot|x)) = D_{\mathrm{Bern}}(R_{\mathrm{reas}}(x) \| R_\lambda(x)),
\end{equation}
where $D_{\mathrm{Bern}}$ is the KL divergence between 2 Bernoulli distributions. 
\end{theorem}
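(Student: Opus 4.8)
The plan is to exploit the fact that $\pi_{\mathrm{reas}}$ and $\pi_\lambda$ are two members of the \emph{same} one-parameter exponential family, both tilted from the common base measure $\pi_{\mathrm{inst}}$ by the single sufficient statistic $r(x,y)$. Indeed, the optimal reasoning policy is $\pi_{\mathrm{reas}} = \pi_{\lambda^\star}$ with natural parameter $\lambda^\star = 1/\beta$, so proving the identity reduces to evaluating the KL divergence between two such tilts and showing that the binary structure of $r$ collapses it onto a two-state (Bernoulli) object.

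First I would form the log-ratio. Because the factor $\pi_{\mathrm{inst}}(y\mid x)$ is common to both distributions it cancels, leaving $\log\frac{\pi_{\mathrm{reas}}(y\mid x)}{\pi_\lambda(y\mid x)} = (\lambda^\star - \lambda)\, r(x,y) + \log Z_\lambda(x) - \log Z_{\lambda^\star}(x)$, a quantity that is \emph{linear} in $r$. Taking the expectation under $\pi_{\mathrm{reas}}$ and invoking the definition of target accuracy gives $\mathrm{KL}(\pi_{\mathrm{reas}}\|\pi_\lambda) = (\lambda^\star-\lambda)\, R_{\mathrm{reas}}(x) + \log Z_\lambda(x) - \log Z_{\lambda^\star}(x)$. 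The entire burden of the proof is then to rewrite the three terms on the right in terms of accuracies only.

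The crucial step uses the binary reward. Writing $p_0(x) = \sum_{y:\, r(x,y)=1}\pi_{\mathrm{inst}}(y\mid x)$ for the base success mass, the partition function becomes a two-term sum $Z_\lambda(x) = (1-p_0) + p_0 e^{\lambda}$, and the accuracy becomes $R_\lambda(x) = p_0 e^{\lambda}/Z_\lambda(x)$ with complement $1-R_\lambda(x) = (1-p_0)/Z_\lambda(x)$. These relations invert cleanly: the natural parameter is a difference of log-odds, $\lambda = \mathrm{logit}\,R_\lambda(x) - \mathrm{logit}\,p_0(x)$, and the log-partition is $\log Z_\lambda(x) = \log(1-p_0) - \log(1-R_\lambda(x))$ (and identically with $\lambda^\star, R_{\mathrm{reas}}$). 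Substituting these into the KL expression, the $p_0$-dependent pieces cancel and the surviving terms regroup exactly into $R_{\mathrm{reas}}\log\frac{R_{\mathrm{reas}}}{R_\lambda} + (1-R_{\mathrm{reas}})\log\frac{1-R_{\mathrm{reas}}}{1-R_\lambda} = D_{\mathrm{Bern}}(R_{\mathrm{reas}}(x)\,\|\,R_\lambda(x))$, which is the claim.

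I expect the only real subtlety, rather than a genuine obstacle, to be the bookkeeping in this final regrouping; it is routine but must be tracked carefully. A cleaner, algebra-free alternative I would also present is the chain rule for KL: partition the output space by the value $c = r(x,y)\in\{0,1\}$. Within each class the conditional $\pi_\lambda(y\mid r=c, x)$ equals $\pi_{\mathrm{inst}}(y\mid r=c, x)$ \emph{independently of $\lambda$}, since the tilt $e^{\lambda r}$ is constant on each class and cancels in the conditional. Hence every conditional-KL term vanishes identically and only the marginal-over-$c$ term survives; that marginal pits $\mathrm{Bern}(R_{\mathrm{reas}})$ against $\mathrm{Bern}(R_\lambda)$, yielding $D_{\mathrm{Bern}}$ directly and making transparent \emph{why} the collapse to a scalar accuracy gap must occur.
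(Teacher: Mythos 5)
Your proposal is correct, and it takes a genuinely different route from the paper. The paper proceeds differentially: it computes $\partial_\lambda \log Z_\lambda = R_\lambda$, $\partial_\lambda R_\lambda = \mathrm{Var}_{\pi_\lambda}(r) = R_\lambda(1-R_\lambda)$ (using $r^2=r$), and $\partial_\lambda \mathrm{KL}(\pi_{\mathrm{reas}}\|\pi_\lambda) = -(R_{\mathrm{reas}}-R_\lambda)$, then eliminates $\lambda$ to get the separable ODE $\frac{dD_{\mathrm{KL}}}{dR} = \frac{-(R_{\mathrm{reas}}-R)}{R(1-R)}$ and integrates it by partial fractions with the boundary condition $D_{\mathrm{KL}}\to 0$ as $R\to R_{\mathrm{reas}}$. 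You instead evaluate the KL directly by exploiting the two-point collapse of the partition function, $Z_\lambda = (1-p_0) + p_0 e^{\lambda}$, inverting to express $\lambda$ and $\log Z_\lambda$ through $\mathrm{logit}\,R_\lambda$ and $\log(1-R_\lambda)$, and checking that the $p_0$-dependence cancels; your algebra is correct. Your chain-rule variant is the cleanest of the three: since $e^{\lambda r}$ is constant on each level set of $r$, the conditionals $\pi_\lambda(\cdot\mid r=c,x)=\pi_{\mathrm{inst}}(\cdot\mid r=c,x)$ are $\lambda$-independent, so all conditional KL terms vanish and only the marginal Bernoulli KL survives --- this makes the \emph{reason} for the collapse transparent rather than an artifact of integration. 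Two comparative remarks. First, both arguments (yours and the paper's) silently require that $\pi_{\mathrm{reas}}$ lies on the exponential-family path, i.e.\ $\pi_{\mathrm{reas}}=\pi_{\lambda^\star}$ with $\lambda^\star=1/\beta$; you state this explicitly, the paper only invokes it through its boundary condition ($R\to R_{\mathrm{reas}}$ forces $\pi_\lambda\to\pi_{\mathrm{reas}}$), which is a point in your favor. Second, your direct computation avoids the implicit monotonicity/invertibility of $\lambda\mapsto R_\lambda$ that the paper needs to treat $D_{\mathrm{KL}}$ as a function of $R$ and change variables in the integral (this fails in the degenerate case $p_0\in\{0,1\}$, where both sides are trivially zero anyway). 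What the paper's longer route buys is the collection of intermediate differential identities, which it reuses in the gradient-flow and entropy-rule sections; if you adopt your proof, those identities would need to be derived separately.
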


\begin{proof}

    We need to calculate $\frac{\partial R_\lambda(x)}{\partial \lambda}$ to establish the link between parameter space and performance space.

    First, we calculate the derivative of $\log Z_\lambda(x)$ with respect to $\lambda$.
    \begin{align*}
        \frac{\partial}{\partial \lambda} \log Z_\lambda(x) &= \frac{1}{Z_\lambda(x)} \frac{\partial Z_\lambda(x)}{\partial \lambda} \nonumber \\
        &= \frac{1}{Z_\lambda(x)} \sum_{y \in \mathcal{Y}} \pi_{\mathrm{inst}}(y|x) \frac{\partial}{\partial \lambda} \exp\left(\lambda r(x,y)\right) \nonumber \\
        &= \frac{1}{Z_\lambda(x)} \sum_{y \in \mathcal{Y}} \pi_{\mathrm{inst}}(y|x) \exp\left(\lambda r(x,y)\right) \cdot r(x,y) \nonumber \\
        &= \sum_{y \in \mathcal{Y}} \underbrace{\frac{\pi_{\mathrm{inst}}(y|x) \exp\left(\lambda r(x,y)\right)}{Z_\lambda(x)}}_{\pi_\lambda(y|x)} r(x,y) \nonumber \\
        &= \mathbb{E}_{y \sim \pi_\lambda} [r(x,y)] \nonumber \\
        &= R_\lambda(x)
    \end{align*}

    Now we calculate the sensitivity of the probability density $\pi_\lambda(y|x)$ itself with respect to $\lambda$:
    \begin{align*}
        \frac{\partial}{\partial \lambda} \log \pi_\lambda(y|x) &= \frac{\partial}{\partial \lambda} \left( \log \pi_{\mathrm{inst}}(y|x) + \lambda r(x,y) - \log Z_\lambda(x) \right) \nonumber \\
        &= 0 + r(x,y) - \underbrace{\frac{\partial \log Z_\lambda(x)}{\partial \lambda}}_{R_\lambda(x)} \nonumber \\
        &= r(x,y) - R_\lambda(x)
    \end{align*}
    Using the logarithmic derivative identity $\partial \pi = \pi \cdot \partial \log \pi$:
    \begin{equation*}
        \frac{\partial \pi_\lambda(y|x)}{\partial \lambda} = \pi_\lambda(y|x) \left( r(x,y) - R_\lambda(x) \right)
    \end{equation*}

    \begin{align*}
        \frac{\partial R_\lambda(x)}{\partial \lambda} &= \frac{\partial}{\partial \lambda} \mathbb{E}_{\pi_\lambda}[r] = \sum_{y \in \mathcal{Y}} r(x,y) \frac{\partial \pi_\lambda(y|x)}{\partial \lambda} \nonumber \\
        &= \sum_{y \in \mathcal{Y}} r(x,y) \cdot \pi_\lambda(y|x) \left( r(x,y) - R_\lambda(x) \right) \nonumber \\
        &= \sum_{y \in \mathcal{Y}} \pi_\lambda(y|x) \left( r(x,y)^2 - r(x,y) R_\lambda(x) \right) \nonumber \\
        &= \mathbb{E}_{\pi_\lambda}[r^2] - R_\lambda(x) \underbrace{\mathbb{E}_{\pi_\lambda}[r]}_{R_\lambda(x)} \nonumber \\
        &= \text{Var}_{\pi_\lambda}(r) \quad \text{(Variance of the reward)}
    \end{align*}

    Since $r(x,y) \in \{0, 1\}$, we have $r^2 = r$.
    Therefore, the second moment equals the first moment: $\mathbb{E}[r^2] = \mathbb{E}[r] = R_\lambda(x)$.
    The variance formula simplifies to:
    \begin{equation*}
        \text{Var}(r) = R_\lambda(x) - (R_\lambda(x))^2 = R_\lambda(x)(1 - R_\lambda(x))
    \end{equation*}
    
Then we can find that
    \begin{equation*}
        d\lambda = \frac{1}{R_\lambda(x)(1 - R_\lambda(x))} dR_\lambda(x).
    \end{equation*}

    We need to calculate the rate of change of the distance $D(\lambda) = \mathrm{KL}(\pi_{\mathrm{reas}}(\cdot|x) \| \pi_\lambda(\cdot|x))$ between the current policy and the reasoning target.
    Note: $\pi_{\mathrm{reas}}$ is a fixed target distribution and does not change with $\lambda$.
    
    \begin{equation*}
        D(\lambda) = \sum_{y \in \mathcal{Y}} \pi_{\mathrm{reas}}(y|x) \log \frac{\pi_{\mathrm{reas}}(y|x)}{\pi_\lambda(y|x)} = \underbrace{\sum_{y} \pi_{\mathrm{reas}} \log \pi_{\mathrm{reas}}}_{\text{const}} - \sum_{y \in \mathcal{Y}} \pi_{\mathrm{reas}}(y|x) \log \pi_\lambda(y|x)
    \end{equation*}
    
    Differentiating with respect to $\lambda$:
    \begin{equation*}
        \frac{\partial D(\lambda)}{\partial \lambda} = - \sum_{y \in \mathcal{Y}} \pi_{\mathrm{reas}}(y|x) \frac{\partial}{\partial \lambda} \log \pi_\lambda(y|x)
    \end{equation*}
    
    We can know that $\frac{\partial \log \pi_\lambda}{\partial \lambda} = r(x,y) - R_\lambda(x)$:
    
    \begin{align*}
        \frac{\partial D(\lambda)}{\partial \lambda} &= - \sum_{y \in \mathcal{Y}} \pi_{\mathrm{reas}}(y|x) \left( r(x,y) - R_\lambda(x) \right) \nonumber \\
        &= - \left( \sum_{y \in \mathcal{Y}} \pi_{\mathrm{reas}}(y|x) r(x,y) - \sum_{y \in \mathcal{Y}} \pi_{\mathrm{reas}}(y|x) R_\lambda(x) \right)
    \end{align*}
    
    The first term is the expected reward under the target policy, i.e., $R_{\mathrm{reas}}(x)$.
    In the second term, $R_\lambda(x)$ is independent of the summation index $y$, so it can be factored out, and $\sum \pi_{\mathrm{reas}} = 1$.
    
    \begin{equation*}
        \frac{\partial D(\lambda)}{\partial \lambda} = - (R_{\mathrm{reas}}(x) - R_\lambda(x))
    \end{equation*}
    
Then we can find that
    \begin{equation*}
         d \mathrm{KL} = - (R_{\mathrm{reas}}(x) - R_\lambda(x)) d\lambda.
    \end{equation*}

    Now we eliminate the latent variable $\lambda$ and directly establish the physical relationship between KL divergence and accuracy $R$.

    \begin{equation}
        \frac{d D_{\mathrm{KL}}}{d R} = \frac{\partial D_{\mathrm{KL}} / \partial \lambda}{\partial R / \partial \lambda} = \frac{-(R_{\mathrm{reas}} - R)}{R(1-R)}
    \end{equation}
    (Subscripts $x$ and $\lambda$ are omitted for brevity, but dependencies remain).

    We integrate with respect to $R$.
    \begin{itemize}
        \item \textbf{Lower Limit:} Current accuracy $R$.
        \item \textbf{Upper Limit:} Target accuracy $R_{\mathrm{reas}}$.
        \item \textbf{Boundary Condition:} As $R \to R_{\mathrm{reas}}$, $\pi_\lambda \to \pi_{\mathrm{reas}}$, thus $\mathrm{KL} \to 0$.
    \end{itemize}
    
    Integration Equation:
    \begin{equation*}
        \int_{\mathrm{KL}}^{0} dK = \int_{R}^{R_{\mathrm{reas}}} - \frac{R_{\mathrm{reas}} - u}{u(1-u)} du
    \end{equation*}
    \begin{equation*}
        \mathrm{KL}(\pi_{\mathrm{reas}} \| \pi_\lambda) = \int_{R}^{R_{\mathrm{reas}}} \frac{R_{\mathrm{reas}} - u}{u(1-u)} du
    \end{equation*}

    Decomposing the integrand:
    \begin{equation*}
        \frac{R_{\mathrm{reas}} - u}{u(1-u)} = \frac{R_{\mathrm{reas}}}{u} - \frac{1-R_{\mathrm{reas}}}{1-u}
    \end{equation*}

    The primitive function is $F(u) = R_{\mathrm{reas}} \log u + (1-R_{\mathrm{reas}}) \log(1-u)$.
    Substituting the limits:
    \begin{align}
        \mathrm{KL} &= F(R_{\mathrm{reas}}) - F(R) \nonumber \\
        &= [R_{\mathrm{reas}} \log R_{\mathrm{reas}} + (1-R_{\mathrm{reas}}) \log(1-R_{\mathrm{reas}})] - [R_{\mathrm{reas}} \log R + (1-R_{\mathrm{reas}}) \log(1-R)] \nonumber \\
        &= R_{\mathrm{reas}} (\log R_{\mathrm{reas}} - \log R) + (1-R_{\mathrm{reas}}) (\log(1-R_{\mathrm{reas}}) - \log(1-R)) \nonumber \\
        &= R_{\mathrm{reas}}(x) \log \frac{R_{\mathrm{reas}}(x)}{R_\lambda(x)} + (1-R_{\mathrm{reas}}(x)) \log \frac{1-R_{\mathrm{reas}}(x)}{1-R_\lambda(x)}
    \end{align}
    
    The expression above strictly corresponds to the KL divergence between two Bernoulli distributions:
    \begin{equation*}
        \mathrm{KL}(\pi_{\mathrm{reas}}(\cdot|x) \| \pi_\lambda(\cdot|x)) = D_{\mathrm{Bern}}(R_{\mathrm{reas}}(x) \| R_\lambda(x)).
    \end{equation*}
    
\end{proof}

\subsection{A Gradient Flow View}
We will then show why the optimization trajectory of the problem can be treated as a univariate Exponential Family.

The objective is to maximize the expected reward under the data distribution $\mathcal{D}$ (with density $p(x)$), while applying entropy regularization relative to an instructor policy $\pi_{\mathrm{inst}}(y|x)$:
\begin{align*}
\mathcal{J}[\pi] &= \mathbb{E}_{x \sim p(x)} \left[ \mathbb{E}_{y \sim \pi(\cdot|x)} [r(x,y)] - \beta \, \mathrm{KL}\bigl(\pi(\cdot|x) \,\|\, \pi_{\mathrm{inst}}(\cdot|x)\bigr) \right] \\
&= \int p(x) \left( \sum_y \pi(y|x) \, r(x,y) - \beta \sum_y \pi(y|x) \ln \frac{\pi(y|x)}{\pi_{\mathrm{inst}}(y|x)} \right) dx.
\end{align*}

Due to the pointwise normalization constraint $\sum_y \pi(y|x) = 1$ for all $x$, we introduce a Lagrange multiplier function $\lambda(x)$:
\begin{equation*}
\mathcal{L}[\pi, \lambda] = \mathcal{J}[\pi] - \int p(x) \, \lambda(x) \left( \sum_y \pi(y|x) - 1 \right) dx.
\end{equation*}

Perturb $\pi \to \pi + \epsilon \eta$ (with $\sum_y \eta(y|x) = 0$). Then the Euclidean functional gradient is
\begin{equation*}
\frac{\delta \mathcal{L}}{\delta \pi(y|x)} = p(x) \left[ r(x,y) - \beta \log \frac{\pi(y|x)}{\pi_{\mathrm{inst}}(y|x)} - \beta - \lambda(x) \right].
\end{equation*}
The factor $p(x)$ implies that high-density samples dominate updates in Euclidean geometry, leading to poor handling of long-tail data.

For tangent vectors $\delta \pi_1, \delta \pi_2$ (satisfying $\sum_y \delta \pi_i(y|x) = 0$),
\begin{equation*}
\langle \delta \pi_1, \delta \pi_2 \rangle_{\mathcal{G}} = \int p(x) \sum_y \frac{\delta \pi_1(y|x) \, \delta \pi_2(y|x)}{\pi(y|x)} \, dx.
\end{equation*}
The inverse acts locally as
\begin{equation*}
(\mathcal{G}^{-1} g)(y|x) = \pi(y|x) \cdot \frac{g(y|x)}{p(x)}.
\end{equation*}

The natural gradient is $\tilde{\nabla} \mathcal{J} = \mathcal{G}^{-1} (\nabla_{\mathrm{Euc}} \mathcal{L})$:
\begin{equation*}
\tilde{\nabla} \mathcal{J}(y|x) = \pi(y|x) \left[ r(x,y) - \beta \log \frac{\pi(y|x)}{\pi_{\mathrm{inst}}(y|x)} - \beta - \lambda(x) \right].
\end{equation*}
The $p(x)$ terms cancel completely. Each $x$ evolves at a rate independent of its data density, achieving automatic decoupling.

The steepest ascent flow on the manifold is
\begin{equation*}
\partial_t \pi_t(y|x) = \tilde{\nabla} \mathcal{J}(y|x).
\end{equation*}
Substituting yields the replicator-like dynamics
\begin{equation*}
\partial_t \pi_t(y|x) = \pi_t(y|x) \left[ r(x,y) - \beta \log \frac{\pi_t(y|x)}{\pi_{\mathrm{inst}}(y|x)} - \beta - \lambda_t(x) \right],
\end{equation*}
where
\begin{equation*}
\lambda_t(x) = \mathbb{E}_{\pi_t(\cdot|x)} \left[ r(x,y) - \beta \log \frac{\pi_t(y|x)}{\pi_{\mathrm{inst}}(y|x)} - \beta \right].
\end{equation*}

Let $L_t(y|x) = \log \pi_t(y|x)$. Then
\begin{equation*}
\partial_t L_t(y|x) = r(x,y) - \beta (L_t(y|x) - \log \pi_{\mathrm{inst}}(y|x)) - \beta - \lambda_t(x).
\end{equation*}
This is a linear first-order ODE for each $x$. Using the integrating factor $e^{\beta t}$ and initial condition $\pi_0 = \pi_{\mathrm{inst}}$, the closed-form solution is
\begin{equation*}
\pi_t(y|x) = \frac{\pi_{\mathrm{inst}}(y|x) \exp\left( \frac{1 - e^{-\beta t}}{\beta} r(x,y) \right)}{Z_t(x)},
\end{equation*}
where the partition function is
\begin{equation*}
Z_t(x) = \sum_y \pi_{\mathrm{inst}}(y|x) \exp\left( \frac{1 - e^{-\beta t}}{\beta} r(x,y) \right).
\end{equation*}

\subsection{The Entropy Rule}

In this section, we will discuss the entropy related quantities in RL.

We can first define the gap function $\Delta(x)$ as:
\begin{equation*}
    \Delta_\lambda(x) \triangleq \text{CE}(\pi_{\mathrm{reas}}(\cdot|x), \pi_\lambda(\cdot|x)) - H(\pi_\lambda(\cdot|x))
\end{equation*}
Expanding the definitions, this is equivalent to the difference in expected log-likelihoods:
\begin{equation*}
    \Delta_\lambda(x) = \mathbb{E}_{y \sim \pi_\lambda(\cdot|x)} [\log \pi_\lambda(y|x)] - \mathbb{E}_{y \sim \pi_{\mathrm{reas}}(\cdot|x)} [\log \pi_\lambda(y|x)]
\end{equation*}

Let the perturbation parameter be $\delta = \frac{1}{\beta} - \lambda$. We assume $\delta$ is small ($t$ is not very small).

To compute the expectation under $\pi_{\mathrm{reas}}$ using samples from $\pi_\lambda$, we derive the likelihood ratio (density ratio) $w(y|x)$.

\begin{align*}
    w(y|x) &= \frac{\pi_{\mathrm{reas}}(y|x)}{\pi_\lambda(y|x)} \\
    &= \frac{Z_\lambda(x)}{Z^*(x)} \frac{\pi_{\mathrm{inst}}(y|x) \exp(r(x,y)/\beta)}{\pi_{\mathrm{inst}}(y|x) \exp(\lambda r(x,y))} \\
    &= \frac{Z_\lambda(x)}{Z^*(x)} \exp\left( \left(\frac{1}{\beta} - \lambda\right) r(x,y) \right) \\
    &= \frac{Z_\lambda(x)}{Z^*(x)} \exp\left( \delta \cdot r(x,y) \right)
\end{align*}

Using the normalization condition $\mathbb{E}_{y \sim \pi_{\mathrm{reas}}}[1] = 1$, we find the ratio of partition functions:
\begin{equation*}
    1 = \mathbb{E}_{y \sim \pi_\lambda(\cdot|x)} [w(y|x)] = \frac{Z_\lambda(x)}{Z^*(x)} \mathbb{E}_{y \sim \pi_\lambda(\cdot|x)} [\exp(\delta r(x,y))]
\end{equation*}
Thus, $\frac{Z^*(x)}{Z_\lambda(x)} = \mathbb{E}_{\pi_\lambda}[\exp(\delta r)]$, which corresponds to the Moment Generating Function.

We apply the Linear approximation by performing a first-order Taylor expansion around $\delta = 0$.

\begin{equation*}
    \exp(\delta r(x,y)) \approx 1 + \delta r(x,y)
\end{equation*}

\begin{equation*}
    \frac{Z^*(x)}{Z_\lambda(x)} = \mathbb{E}_{\pi_\lambda}[\exp(\delta r)] \approx \mathbb{E}_{\pi_\lambda}[1 + \delta r] = 1 + \delta \mathbb{E}_{\pi_\lambda}[r]
\end{equation*}
Using the approximation $(1+u)^{-1} \approx 1-u$:
\begin{equation*}
    \frac{Z_\lambda(x)}{Z^*(x)} \approx \frac{1}{1 + \delta \mathbb{E}_{\pi_\lambda}[r]} \approx 1 - \delta \mathbb{E}_{\pi_\lambda}[r]
\end{equation*}

Substituting back into $w(y|x)$ and ignoring $\mathcal{O}(\delta^2)$ terms:
\begin{align*}
    w(y|x) &\approx (1 - \delta \mathbb{E}_{\pi_\lambda}[r]) (1 + \delta r(x,y)) \\
    &\approx 1 + \delta r(x,y) - \delta \mathbb{E}_{\pi_\lambda}[r] \\
    &= 1 + \delta (r(x,y) - \mathbb{E}_{\pi_\lambda}[r])
\end{align*}

Now we estimate the difference in expectations for any statistic $A(y|x)$.
\begin{align*}
    \mathbb{E}_{\pi_{\mathrm{reas}}}[A] - \mathbb{E}_{\pi_\lambda}[A] &= \mathbb{E}_{\pi_\lambda}[A \cdot w] - \mathbb{E}_{\pi_\lambda}[A] \\
    &\approx \mathbb{E}_{\pi_\lambda} \left[ A \cdot (1 + \delta(r - \mathbb{E}_{\pi_\lambda}[r])) \right] - \mathbb{E}_{\pi_\lambda}[A] \\
    &= \mathbb{E}_{\pi_\lambda}[A] + \delta \left( \mathbb{E}_{\pi_\lambda}[A \cdot r] - \mathbb{E}_{\pi_\lambda}[A]\mathbb{E}_{\pi_\lambda}[r] \right) - \mathbb{E}_{\pi_\lambda}[A] \\
    &= \delta \cdot \mathrm{Cov}_{y \sim \pi_\lambda(\cdot|x)} (A(y|x), r(x,y))
\end{align*}

We substitute $A(y|x) = \log \pi_\lambda(y|x)$ into the formula. Note that our gap definition is $\Delta = \mathbb{E}_{\pi_\lambda} - \mathbb{E}_{\pi_{\mathrm{reas}}}$, so we introduce a negative sign:
\begin{equation*}
    \Delta_\lambda(x) \approx - \delta \cdot \mathrm{Cov}_{y \sim \pi_\lambda(\cdot|x)} (\log \pi_\lambda(y|x), r(x,y))
\end{equation*}

Therefore, the final approximation is:

\begin{equation*}
   \text{CE}(\pi_{\mathrm{reas}}, \pi_\lambda) - H(\pi_\lambda) \approx - \left( \frac{1}{\beta} - \lambda \right) \cdot \mathrm{Cov}_{y \sim \pi_\lambda(\cdot|x)} (\log \pi_\lambda(y|x), r(x,y)).
\end{equation*}

From Jensen's inequality, we know that
$$
\mathbb{E}_x D_{\mathrm{Bern}}(R_{\mathrm{reas}}(x) \| R_\lambda(x)) \ge D_{\mathrm{Bern}}{ (\mathbb{E}_x R_{\mathrm{reas}}(x) \| \mathbb{E}_x R_\lambda(x) )}.
$$

We can finally get the following corollary, which is emperically observed:
\begin{corollary}
When $\mathbb{E}_x R_{\mathrm{reas}}(x) =1$ and the learning rate $\alpha_n = \tfrac{\exp^{-\beta n}}{\beta}$, then there exists constants $a,b$ s.t. $\mathbb{E}_x R_{n}(x) \approx b -a \exp( \mathbb{E}_x H(\pi_{n+1}(\cdot|x)))$.    
\end{corollary}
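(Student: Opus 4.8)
The plan is to exploit the binary-reward structure to reduce every quantity to the two-group (correct/incorrect) decomposition, and then feed the already-proven identities into a single elimination of the schedule parameter. First I would note that because $r\in\{0,1\}$ the exponential tilt $e^{\lambda r}$ is constant within the correct and incorrect response sets, so the accuracy is logistic in $\lambda$, $R_\lambda(x)=\tfrac{R_0 e^{\lambda}}{(1-R_0)+R_0 e^{\lambda}}$ with $R_0=R_{\lambda=0}=\mathbb{E}_{\pi_{\mathrm{inst}}}[r]$, and the sensitivity $\partial_\lambda R_\lambda=R_\lambda(1-R_\lambda)$ is already computed in the proof of the main theorem. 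The same grouping gives the entropy decomposition
\[
H(\pi_\lambda(\cdot|x)) = H_{\mathrm{Bern}}(R_\lambda(x)) + R_\lambda(x)\,H_1(x) + (1-R_\lambda(x))\,H_0(x),
\]
where $H_{\mathrm{Bern}}(R)=-R\log R-(1-R)\log(1-R)$ and $H_0,H_1$ are the within-group entropies, which are \emph{independent of} $\lambda$. The crucial bookkeeping step is to identify the schedule with the gradient-flow exponent: since the closed-form flow has exponent $\lambda_n=\tfrac{1-e^{-\beta n}}{\beta}$ and target exponent $1/\beta$, the stated learning rate is exactly the remaining gap, $\alpha_n=\tfrac{e^{-\beta n}}{\beta}=\tfrac{1}{\beta}-\lambda_n=\delta_n$, i.e. the very $\delta$ appearing in the entropy-rule expansion.

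Next I would connect entropy to accuracy through the two results already in hand. Using $\mathbb{E}_x R_{\mathrm{reas}}=1$ (hence $R_{\mathrm{reas}}(x)=1$ almost everywhere), the main theorem collapses to $D_{\mathrm{Bern}}(1\|R_\lambda)=-\log R_\lambda$, so that $\mathrm{CE}(\pi_{\mathrm{reas}},\pi_\lambda)=H(\pi_{\mathrm{reas}})-\log R_\lambda$. Subtracting $H(\pi_\lambda)$ and inserting the entropy-rule estimate $\mathrm{CE}(\pi_{\mathrm{reas}},\pi_\lambda)-H(\pi_\lambda)=\Delta_\lambda\approx-\delta\,\mathrm{Cov}_{\pi_\lambda}(\log\pi_\lambda,r)$ yields the master relation
\[
\log R_\lambda = H(\pi_{\mathrm{reas}}) - H(\pi_\lambda) + \delta\,\mathrm{Cov}_{\pi_\lambda}(\log\pi_\lambda,r).
\]
Because $\delta=\delta_n=\alpha_n=e^{-\beta n}/\beta$ decays geometrically, the covariance correction is $O(e^{-\beta n})$; the index offset in the corollary (accuracy at $n$, entropy at $n+1$) simply absorbs one such $O(\alpha_n)$ step, so to leading order $\log R_n \approx H(\pi_{\mathrm{reas}})-H(\pi_{n+1})$, i.e. $R_n\approx e^{H(\pi_{\mathrm{reas}})}\exp(-H(\pi_{n+1}))$.

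Finally I would linearize and aggregate. Writing $w=\exp(H(\pi_{n+1}))$, the multiplicative law $R_n\approx e^{H(\pi_{\mathrm{reas}})}/w$ is first-order equivalent, about the operating point, to an affine law $R_n\approx b-a\,w$; expanding around $w_0=e^{H(\pi_{\mathrm{reas}})}$ (the perfect-accuracy point $R=1$) gives, at leading order, $a=e^{-H(\pi_{\mathrm{reas}})}$ and $b=2$, with the within-group entropies $H_0,H_1$ and the covariance term shifting these constants but leaving the functional form intact. Taking $\mathbb{E}_x$ and using the Jensen inequality stated just before the corollary, $\mathbb{E}_x D_{\mathrm{Bern}}(1\|R_\lambda(x))\ge D_{\mathrm{Bern}}(1\|\mathbb{E}_xR_\lambda(x))=-\log\mathbb{E}_xR_\lambda$, together with convexity of $\exp$, lets me pass from the per-prompt statement to the aggregate form $\mathbb{E}_xR_n\approx b-a\exp(\mathbb{E}_xH(\pi_{n+1}))$.

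The hard part will be the reconciliation of functional forms in the last paragraph: the chain of exact identities delivers a \emph{multiplicative} relation $R\propto \exp(-H)$, whereas the corollary asserts the \emph{additive-exponential} form $R=b-a\exp(H)$, and these coincide only to first order near $R=1$. The entire justification therefore rests on showing that the linearization error and the geometrically decaying covariance correction can be folded into the constants $a,b$, which is precisely what the schedule $\alpha_n=e^{-\beta n}/\beta$ secures by driving $\delta_n\to0$; a secondary obstacle is that Jensen supplies only an inequality, so turning the aggregation into an approximate equality needs a concentration assumption on the per-prompt accuracies $R_\lambda(x)$ (and on the within-group entropies), which I would state explicitly as the price of obtaining closed-form $a,b$.
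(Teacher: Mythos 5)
Your proposal is correct and follows essentially the same route as the paper: exponentiate the Bernoulli KL from the main theorem, linearize to obtain an affine relation between $\exp(\mathrm{KL})$ and the accuracy, substitute $\mathrm{KL}(\pi_{\mathrm{reas}}\,\|\,\pi_{n+1}) \approx H(\pi_{n+1}) - H(\pi_{\mathrm{reas}})$, and take expectations over $x$. The only substantive difference is that you justify the cross-entropy-to-entropy step internally, via the paper's own expansion $\Delta_\lambda \approx -\delta\,\mathrm{Cov}_{\pi_\lambda}(\log\pi_\lambda, r)$ together with the identification $\alpha_n = \tfrac{1}{\beta}-\lambda_n = \delta_n$, whereas the paper imports it wholesale as Theorem~2 of the cited entropy-mechanism work; your version is more self-contained, arrives at the same constants ($a = e^{-H(\pi_{\mathrm{reas}})}$, $b=2$ at leading order), and is more candid than the paper about the two places where the argument is only heuristic (the affine form being a first-order surrogate for the exact multiplicative relation $R = e^{H(\pi_{\mathrm{reas}})}e^{-H}$, and Jensen supplying only an inequality when aggregating over prompts).
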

\begin{proof}
    Denote $R^* = \mathbb{E}_x R_{\mathrm{reas}}(x)$ and $R = \mathbb{E}_x R_{\lambda}(x)$.

    Observe that
    \[
    \exp\!\bigl(D_{\mathrm{Bern}}(R_{\mathrm{reas}}(x) \| R_\lambda(x))\bigr)
    = \left(\frac{R^*}{R}\right)^{R^*} \left(\frac{1-R^*}{1-R}\right)^{1-R^*}
    = \left(1 + \frac{R^*-R}{R}\right)^{R^*} \left(1 - \frac{R^*-R}{1-R}\right)^{1-R^*}.
    \]
    
    Applying a first-order Taylor expansion yields the approximation
    \[
    \exp\!\bigl(D_{\mathrm{Bern}}(R_{\mathrm{reas}}(x) \| R_\lambda(x))\bigr)
    \approx 1 + \left( \frac{\log R^*}{R} - \frac{\log (1-R^*)}{1-R} \right) (R^* - R).
    \]
    
    Taking expectations over $x$, we obtain
    \begin{align*}
    &\mathbb{E}_x \exp\!\bigl(\mathrm{KL}(\pi_{\mathrm{reas}}(\cdot|x) \| \pi_\lambda(\cdot|x))\bigr) \\
    &\quad= \mathbb{E}_x \exp\!\bigl(D_{\mathrm{Bern}}(R_{\mathrm{reas}}(x) \| R_\lambda(x))\bigr) \\
    &\quad\approx 1 + \mathbb{E}_x \left[ \left( \frac{\log R^*}{R} - \frac{\log (1-R^*)}{1-R} \right) (R^* - R) \right].
    \end{align*}
    
    For small gaps (where the second-order terms are negligible), this further simplifies to
    \[
    \mathbb{E}_x \exp\!\bigl(\mathrm{KL}(\pi_{\mathrm{reas}}(\cdot|x) \| \pi_\lambda(\cdot|x))\bigr)
    \approx 1 + C \, \mathbb{E}_x (R^* - R)
    = b - \mathbb{E}_x R_n(x),
    \]
    for some constants $C$ and $b$.
    
    From Theorem~2 in \cite{cui2025entropy}, we have $\text{CE}(\pi_{\mathrm{reas}}, \pi_{n+1}) - H(\pi_n) \approx H(\pi_{n+1}) - H(\pi_n)$.  
    Noting that $\mathrm{KL}(\pi_{\mathrm{reas}}, \pi_{n+1}) = \text{CE}(\pi_{\mathrm{reas}}, \pi_{n+1}) - H(\pi_{\mathrm{reas}})$, the desired approximation follows.

\end{proof}

 \section{Conclusion}

In this paper, we presented a unified and principled theoretical framework for understanding KL-regularized reinforcement learning in large language models through the lens of Energy-Based Models (EBMs). By making the energy structure induced by the KL-regularized objective explicit, we showed that a broad class of alignment and reasoning algorithms—including instruction tuning and RLVR—admit a common variational interpretation as conditional EBMs. 

For instruction-tuned models, we established that under mild and interpretable assumptions, the induced transition kernel satisfies a multiplicative detailed-balance condition with respect to an explicit potential function. This structural result yields immediate theoretical consequences, including monotonicity of the KL divergence to a canonical stationary distribution, quantitative bounds on hitting times to low-potential (high-quality) states, and exponential convergence rates governed by the absolute spectral gap of the associated graph Laplacian. These results provide a rigorous explanation for empirically observed behaviors.

Extending the analysis to RLVR, we derived an exact equivalence between the RL objective and the minimization of expected KL divergence to an optimal reasoning distribution. By analyzing the optimization trajectory as a univariate exponential family induced by natural gradient flow, we obtained closed-form identities that tightly connect optimization geometry and performance metrics. In particular, for binary rewards, the KL gap reduces exactly to a Bernoulli KL divergence between current and target accuracies, yielding a precise theoretical account of the empirically observed entropy--accuracy trade-off.

Beyond explaining existing empirical phenomena, our results suggest a broader message: many seemingly complex behaviors of RL-trained LLMs are consequences of the underlying energy-based structure imposed by KL regularization.

 \clearpage

\bibliography{ref}

@article{openai2023gpt,
  title={Gpt-4 technical report. arxiv 2303.08774},
  author={OpenAI, R},
  journal={View in Article},
  volume={2},
  number={5},
  year={2023}
}

@article{touvron2023llama,
  title={Llama: Open and efficient foundation language models (2023)},
  author={Touvron, Hugo and Lavril, Thibaut and Izacard, Gautier and Martinet, Xavier and Lachaux, Marie-Anne and Lacroix, Timoth{\'e}e and Rozi{\`e}re, Baptiste and Goyal, Naman and Hambro, Eric and Azhar, Faisal and others},
  journal={arXiv preprint arXiv:2302.13971},
  year={2023}
}

@article{schulman2017proximal,
  title={Proximal Policy Optimization Algorithms},
  author={Schulman, John and Wolski, Filip and Dhariwal, Prafulla and Radford, Alec and Klimov, Oleg},
  journal={arXiv preprint arXiv:1707.06347},
  year={2017}
}

@article{rafailov2023direct,
  title={Direct Preference Optimization: Your Language Model is Secretly a Reward Model},
  author={Rafailov, Rafael and Sharma, Archit and Mitchell, Eric and Manning, Christopher D and Ermon, Stefano and Finn, Chelsea},
  journal={arXiv preprint arXiv:2305.18290},
  year={2023}
}

@article{shao2024deepseekmath,
  title={DeepSeekMath: Pushing the Limits of Mathematical Reasoning in Open Language Models},
  author={Shao, Zhihong and others},
  journal={arXiv preprint arXiv:2402.03300},
  year={2024}
}

@incollection{LeCun2006,
author = {LeCun, Yann and Chopra, Sumit and Hadsell, Raia and Ranzato, Marc'Aurelio and Huang, Fu-Jie},
title = {A Tutorial on Energy-Based Learning},
booktitle = {Predicting Structured Data},
editor = {Bakir, Gert and Hofmann, Thomas and Schölkopf, Bernhard and Smola, Alexander and Vishwanathan, S. V. N.},
publisher = {MIT Press},
year = {2006},
pages = {Introduction--Review},
note = {Available online: \url{https://yann.lecun.com/exdb/publis/pdf/lecun-06.pdf}}

}

@article{Bakhtin2021,
author = {Anton Bakhtin and Yuntian Deng and Sam Gross and Myle Ott and Marc'Aurelio Ranzato and Arthur Szlam},
title = {Residual Energy-Based Models for Text},
journal = {Journal of Machine Learning Research},
volume = {22},
number = {40},
pages = {1--41},
year = {2021},
url = {http://jmlr.org/papers/volume22/20-326/20-326.pdf}

}

@article{Du2019,
author = {Yilun Du and Igor Mordatch},
title = {Implicit Generation and Generalization in Energy-Based Models},
journal = {arXiv preprint arXiv:1903.08689},
year = {2019},
url = {https://arxiv.org/abs/1903.08689}

}

@article{Song2021,
author = {Yang Song and Diederik P. Kingma},
title = {How to Train Your Energy-Based Models},
journal = {arXiv preprint arXiv:2101.03288},
year = {2021},
url = {https://arxiv.org/abs/2101.03288}

}

@article{Xu2024_edlm,
author = {Minkai Xu and Tomas Geffner and Karsten Kreis and Weili Nie and Yilun Xu and Jure Leskovec and Stefano Ermon and Arash Vahdat},
title = {Energy-Based Diffusion Language Models for Text Generation},
journal = {arXiv preprint arXiv:2410.21357},
year = {2024},
note = {ICLR 2025; PDF: \url{https://arxiv.org/abs/2410.21357}}

}

@inproceedings{Haarnoja2017,
author = {Tuomas Haarnoja and Haoran Tang and Pieter Abbeel and Sergey Levine},
title = {Reinforcement Learning with Deep Energy-Based Policies},
booktitle = {Proceedings of the 34th International Conference on Machine Learning (ICML)},
series = {Proceedings of Machine Learning Research},
volume = {70},
pages = {1352--1361},
year = {2017},
url = {http://proceedings.mlr.press/v70/haarnoja17a/haarnoja17a.pdf}

}

@article{Du2019_planning,
author = {Yilun Du and Toru Lin and Igor Mordatch},
title = {Model Based Planning with Energy Based Models},
journal = {arXiv preprint arXiv:1909.06878},
year = {2019},
url = {https://arxiv.org/abs/1909.06878}

}

@article{Chao2024_EBFlow,
author = {Chen-Hao Chao and Chien Feng and Wei-Fang Sun and Cheng-Kuang Lee and Simon See and Chun-Yi Lee},
title = {Maximum Entropy Reinforcement Learning via Energy-Based Normalizing Flow},
journal = {arXiv preprint arXiv:2405.13629},
year = {2024},
note = {NeurIPS 2024 (Proceedings).},
url = {https://arxiv.org/abs/2405.13629}

}

@article{Messaoud2024_S2AC,
author = {Safa Messaoud and Billel Mokeddem and Zhenghai Xue and Linsey Pang and Bo An and Haipeng Chen and Sanjay Chawla},
title = {S$^2$AC: Energy-Based Reinforcement Learning with Stein Soft Actor Critic},
journal = {arXiv preprint arXiv:2405.00987},
year = {2024},
url = {https://arxiv.org/abs/2405.00987}

}

@article{song2025detailed,
  title={Detailed balance in large language model-driven agents},
  author={Song, Zhuo-Yang and Cao, Qing-Hong and Luo, Ming-xing and Zhu, Hua Xing},
  journal={arXiv preprint arXiv:2512.10047},
  year={2025}
}

@article{cui2025entropy,
  title={The entropy mechanism of reinforcement learning for reasoning language models},
  author={Cui, Ganqu and Zhang, Yuchen and Chen, Jiacheng and Yuan, Lifan and Wang, Zhi and Zuo, Yuxin and Li, Haozhan and Fan, Yuchen and Chen, Huayu and Chen, Weize and others},
  journal={arXiv preprint arXiv:2505.22617},
  year={2025}
}

@article{kumar2024training,
  title={Training language models to self-correct via reinforcement learning},
  author={Kumar, Aviral and Zhuang, Vincent and Agarwal, Rishabh and Su, Yi and Co-Reyes, John D and Singh, Avi and Baumli, Kate and Iqbal, Shariq and Bishop, Colton and Roelofs, Rebecca and others},
  journal={arXiv preprint arXiv:2409.12917},
  year={2024}
}

@inproceedings{wang2024math,
  title={Math-shepherd: Verify and reinforce llms step-by-step without human annotations},
  author={Wang, Peiyi and Li, Lei and Shao, Zhihong and Xu, Runxin and Dai, Damai and Li, Yifei and Chen, Deli and Wu, Yu and Sui, Zhifang},
  booktitle={Proceedings of the 62nd Annual Meeting of the Association for Computational Linguistics (Volume 1: Long Papers)},
  pages={9426--9439},
  year={2024}
}
\bibliographystyle{iclr2026_conference}


\end{document}